\documentclass{article}

\usepackage{microtype}
\usepackage[margin=1in]{geometry}
\usepackage{graphicx}
\usepackage{subfigure}
\usepackage{booktabs} 
\usepackage{natbib}
\usepackage{hyperref}
\pdfstringdefDisableCommands{\def\eqref#1{(\ref{#1})}}


\usepackage{amsmath,mathrsfs}
\usepackage{dsfont}
\usepackage{amssymb}
\usepackage{amsthm}
\usepackage{bm}
\usepackage{verbatim}
\usepackage{mathtools}
\mathtoolsset{showonlyrefs}

\newtheorem{theorem}{Theorem}

\newtheorem{lemma}{Lemma}
\newtheorem{proposition}{Proposition}
\newtheorem{remark}[theorem]{Remark}

\newtheorem{assumption}{Assumption}

\usepackage{algorithm}
\usepackage{algorithmic}
\newcommand{\nc}[1]{\textcolor{red}{NC: #1}}

\newcommand{\R}{\mathbb{R}}

\newcommand{\E}{\mathbb{E}}

\newcommand{\B}{\mathcal{B}}
\newcommand{\I}{\mathcal{I}}
\newcommand{\Prob}{\mathbb{P}}
\newcommand{\1}{\mathds{1}}
\DeclareMathOperator*{\argmin}{\mathrm{argmin}}
\DeclareMathOperator*{\argmax}{\mathrm{argmax}}

\usepackage{lineno}
\usepackage{authblk}

\makeatletter
\newcommand{\printfnsymbol}[1]{\textsuperscript{\@fnsymbol{#1}}}
\makeatother

\begin{document}

\title{Regime Switching Bandits}


\author{Xiang Zhou\thanks{Equal contribution} 
\thanks{Department of Systems Engineering and Engineering Management; The Chinese University of Hong Kong; zhouxng@se.cuhk.edu.hk} \quad Yi Xiong\printfnsymbol{1}\thanks{Department of Systems Engineering and Engineering Management; The Chinese University of Hong Kong; yxiong@se.cuhk.edu.hk} \quad Ningyuan Chen\thanks{The Rotman School of Management; University of Toronto; ningyuan.chen@utoronto.ca} \quad Xuefeng Gao\thanks{Department of Systems Engineering and Engineering Management; The Chinese University of Hong Kong; xfgao@se.cuhk.edu.hk}}

\maketitle
\renewcommand\thefootnote{\textparagraph}
\footnotetext{Equal contribution}

\begin{abstract}
We study a multi-armed bandit problem where the rewards exhibit regime switching. Specifically, the distributions of the random rewards generated
from all arms are modulated by a common underlying state modeled as a finite-state Markov chain. The agent does not observe the underlying state and has to learn the transition matrix and the reward distributions. We propose a learning algorithm for this problem, building on spectral method-of-moments estimations for hidden Markov models, belief error control in partially observable Markov decision processes and upper-confidence-bound methods for online learning. We also establish an upper bound $O(T^{2/3}\sqrt{\log T})$ for the proposed learning algorithm where $T$ is the learning horizon. Finally,
we conduct proof-of-concept experiments to illustrate
the performance of the learning algorithm.
\end{abstract}

	\section{Introduction}
	The multi-armed bandit (MAB) problem is a popular model for sequential decision making with unknown information: the decision maker makes decisions repeatedly among $I$ different options, or arms.
	After each decision she receives a random reward having an \textit{unknown} probability distribution that depends on the chosen arm.
	The objective is to maximize the expected total reward over a finite horizon of $T$ periods.
	The MAB problem has been extensively studied in various fields and applications including Internet advertising, dynamic pricing, recommender systems, clinical trials and medicine. See, e.g., \cite{Bouneffouf2019, Bubeck2012, Slivkins2019}.
	In the classical MAB problem, it is typically assumed that the random reward of each arm is i.i.d. (independently and identically distributed) over time and
	independent of the rewards from other arms.
	However, these assumptions do not necessarily hold in practice \cite{Besbes2014}.
	To address the drawback, a growing body of literature studies MAB problems with non-stationary rewards to capture temporal changes in the reward distributions in applications, see e.g. \cite{Besbes2019, Cheung2018, Garivier2011}. 

	In this paper, we study a non-stationary MAB model with Markovian regime-switching rewards.
	We assume that the random rewards associated with all the arms are modulated by a common \emph{unobserved} state (or regime) $\{M_t\}$ modeled as a finite-state Markov chain.
	Given $M_t=m$, the reward of arm $i$ is i.i.d., whose distribution is denoted $Q(\cdot|m,i)$.
	Such structural change of the environment is usually referred to as regime switching in finance \cite{mamon2007hidden}.
	The agent doesn't observe or control the underlying state $M_t$, and has to learn the transition probability matrix $P$ of $\{M_t\}$ as well as the distribution of reward of each arm $Q(\cdot|m,i)$, based on the observed historical rewards. The goal of the agent is to design a learning policy that decides which arm to pull in each period to minimize the expected regret over $T$ periods.

	The regime-switching models are widely used in various industries. For example, in finance, the sudden changes of market environments are usually modeled as hidden Markov chains. In revenue management and marketing, a firm may face a shift in consumer demand due to undetected changes in sentiment or competition. In such cases, when the agents (traders or firms) take actions (trading financial assets with different strategies or setting prices), they need to learn the reward and the underlying state at the same time. Our setup is designed to tackle such problems.

	\textbf{Our Contribution.}
	Our study features novel designs in three regards.
	In terms of problem formulation, online learning with unobserved states has attracted some attention recently \cite{Azizzadenesheli2016,Fiez2019}.
	We consider the strongest oracle among the studies, who knows $P$ and $Q(\cdot|m,i)$, but doesn't observe the hidden state $M_t$.
	The oracle thus faces a partially observable Markov decision process (POMDP) \cite{Krish2016}.
	By reformulating it as a Markov decision process (MDP) with a continuous belief space (i.e., a distribution over hidden states), the oracle then solves the optimal policy using the Bellman equation.
	Having sublinear regret benchmarked against the strong oracle, our algorithm has better theoretical performance than others with weaker oracles such as the best fixed arm \cite{Fiez2019} or memoryless policies (the action only depends on the current observation) \cite{Azizzadenesheli2016}.

	In terms of algorithmic design, we propose a learning algorithm (see Algorithm \ref{alg:SEEU}) with two key ingredients.
	First, it builds on the recent advance on the estimation of the parameters of hidden Markov models (HMMs) using spectral method-of-moments methods, which involve the spectral decomposition of certain low-order multivariate moments computed from data \cite{Anandkumar2012, Anandkumar2014, Azizzadenesheli2016}.
	It benefits from the theoretical finite-sample bound of spectral estimators,
	while the finite-sample guarantees of other alternatives such as maximum likelihood estimators remain an open problem \cite{Lehericy2019}.
	Second, it builds on the well-known
	``upper confidence bound'' (UCB) method in reinforcement learning \cite{Auer2007, Jaksch2010}. To deal with the belief of the hidden state which is subject to the estimation error,
	we divide the horizon into nested exploration and exploitation phases.
	Note that this is a unique challenge in our formulation as the oracle uses the optimal policy of the POMDP (belief MDP).
	We use spectral estimators in the exploration phase
	to gauge the estimation error of $P$ and $Q(\cdot|m,i)$.
	We use the UCB method to control the regret in the exploitation phase.
	Different from other learning problems, we re-calibrate the belief at the beginning of each exploitation phase based on the parameters estimated in the most recent exploration phase using previous exploratory samples.


	In terms of technical analysis, we establish a regret bound of $O(T^{2/3} \sqrt{\log(T)})$ for our proposed learning algorithm where $T$ is the learning horizon. 
	Our analysis is novel in two aspects.
	First, we need to control of the error of the belief state, which itself is not directly observed.
	This is in stark contrast to learning MDPs \citep{Jaksch2010, Ortner2012} with observed states.
	Note that the estimation error of the parameters affect the updating of the belief in every previous period.
	Hence one would expect that the error in the belief state might accumulate over time and explode even if the estimation of the parameters is quite accurate.
	We show that this is not the case and as a result the regret attributed to the belief error is well controlled.
	Second, we provide an explicit bound for the span of the bias function (also referred as the relative value function) for the POMDP or the belief MDP which has a continuous state space. Such a bound is often critical in the regret analysis of undiscounted reinforcement learning of continuous MDP, but it is either \textit{taken as an assumption} \citep{Qian2018} or proved under
	H\" older continuity assumptions that do not hold for the belief transitions in our setting \citep{Ortner2012, Lakshmanan2015}. We provide an explicit bound on the bias span by using an approach based on the classical vanishing discount technique and \citep{Hinderer2005} which provides general tools for proving Lipschitz continuity of value functions in finite-horizon discounted MDPs with general state spaces.  

	\textbf{Related Work.}
	Studies on non-stationary/switching MAB investigate the problem when the rewards of all arms may change over time \cite{auer2002nonstochastic, Garivier2011, Besbes2014,auer2019adaptively}.
	Our formulation can be regarded as non-stationary MAB with a special structure.
	They consider an even stronger oracle than ours, the best arm in each period.
	However, the total number of changes or the changing budget have to be sublinear in $T$ to achieve sublinear regret.
	In our formulation, the total number of transitions of the underlying state is linear in $T$, and the algorithms proposed in these papers fail even considering our oracle (See Section~\ref{sec:numerical}).
	Other studies focus on linear changing budget with some structure such as seasonality \cite{chen2020learning}, which is not present in this paper.

	Our work is also related to the restless Markov bandit problem
	\cite{Guha2010, Ortner2014, Slivkins2008} in which the state of each arm evolves according to independent Markov chains. In contrast, our regime-switching MAB model assumes a \textit{common} underlying Markov chain so that the rewards of all arms are correlated, and the underlying state is unobservable to the agent.
	In addition, our work is related to MAB studies where rewards of all arms depend on a common unknown parameter or a latent random variable, see, e.g., \cite{Atan2018, Gupta2018, Lattimore2014, Maillard2014, Mersereau2009}. Our model differs from them in that the common latent state variable follows a dynamic stochastic process which introduces difficulties in algorithm design and analysis.

	Two papers have similar settings to ours.
	\cite{Fiez2019} studies MAB problems whose rewards are modulated by an unobserved Markov chain and the transition matrix may depend on the action.
	However, their oracle is the best fixed arm when defining the regret, which is much weaker than the optimal policy of the POMDP (the performance gap is linear between the two oracles).
	Therefore, their algorithm is expected to have linear regret when using our oracle. \cite{Azizzadenesheli2016} also proposes to use spectral estimators to learn POMDPs.
	Their oracle is the optimal memoryless policy, i.e., a policy that only depends on the current reward observation instead of using all historical observations to form the belief of the underlying state (the performance gap is generally linear in $T$ between their oracle and ours).
	Such a policy set allows them to circumvent the introduction of the belief entirely.
	As a result, the dynamics under the memoryless policy can be viewed as a finite-state (modified) HMM and spectral estimators can be applied.
	Instead, because we consider the optimal belief-based policy, such reduction is not available.
	Our algorithm and regret analysis hinge on the interaction between the estimation of the belief and the spectral estimators.
	Our algorithm needs separate exploration to apply the spectral methods and uses the belief-based policy for exploitation. For the regret analysis, unlike \cite{Azizzadenesheli2016}, we have to carefully control the belief error and bound the span of the bias function from the optimistic belief MDP in each episode.
	The comparison of our study with related papers is summarized in Table~\ref{tab:literature}.  Note that we only present the regret in terms of $T$.
	\begin{table}[h!]
            \centering
		\begin{tabular}{llll}
			\toprule
			Papers & Oracle & Changing Budget & Regret \\
			\midrule
			\cite{auer2002nonstochastic}&Best fixed action&Linear & $\tilde O(\sqrt{T})$\\
			\cite{Garivier2011}       &Best action in each period& Finite & $\tilde{O}(\sqrt{T})$\\
			\cite{Besbes2014}       &Best action in each period& Sublinear & $\tilde{O}(T^{2/3})$ \\
			\cite{Azizzadenesheli2016}& Optimal memoryless policy& Linear&  $\tilde O(\sqrt{T})$\\
			This paper & Optimal POMDP policy& Linear&  $\tilde O(T^{2/3})$\\
			\bottomrule
		\end{tabular}
		\caption{Comparison of our study with some related literature.}
		\label{tab:literature}
	\end{table}

	\section{MAB with Markovian Regime Switching}
	\subsection{Problem Formulation}
	Consider the MAB problem with arms $\mathcal{I}\coloneqq\left\{1,\dots,I\right\}$.
	There is a Markov chain $\{M_t\}$ with states $\mathcal{M}\coloneqq\{1,2,\dots,M\}$ and transition probability matrix $P\in \R^{M\times M}$.
	In period $t=1,2,\dots$, if the state of the Markov chain $M_t=m$ and the agent chooses arm $I_t= i$, then the reward in that period is $R_t$ with discrete finite support, and its distribution is denoted by
	$Q(\cdot|m,i)\coloneqq \Prob (R_t\in \cdot|M_t=m,I_t=i)$,
	with $\mu_{m,i}\coloneqq \E[R_t|M_t=m,I_t=i]$.
	We use $\bm \mu\coloneqq(\mu_{m,i})\in \R^{M\times I}$ to denote the mean reward matrix.
	The agent knows $M$ and $I$, but has no knowledge about the underlying state $M_t$ (also referred to as the regime), the transition matrix $P$ or the reward distribution $Q(\cdot|m,i)$.
	The goal is to design a learning policy that is adapted to the filtration generated by the observed rewards to decide which arm to pull in each period to maximize the expected cumulative reward over $T$ periods where $T$ is unknown in advance.

	If an oracle knows $P$, $Q(\cdot|m,i)$ and the underlying state $M_t$, then the problem becomes trivial as s/he would select $I^{\ast}_t=\argmax_{i\in\mathcal{I}} \mu_{M_t,i}$ in period $t$.
	If we benchmark a learning policy against the oracle, then the regret 
	must be linear in $T$,
	because the oracle always observes $M_t$ while the agent cannot predict the transition based on the history.
	Whenever a transition occurs, there is non-vanishing regret incurred.
	Since the number of transitions during $[0,T]$ is linear in $T$,
	the total regret is of the same order.
	Since comparing to the oracle knowing $M_t$ is uninformative,
	we consider a weaker oracle who knows $P$, $Q(\cdot|m,i)$, \emph{but not} $M_t$.
	In this case, the oracle solves a POMDP since the states $M_t$ are unobservable.
	The total expected reward of the POMDP can be shown to scale linearly in $T$, and  asymptotically the reward per period converges to a constant denoted by $\rho^*$. See Section~\ref{subsec:belief MDP}.

	For a learning policy $\pi$,
	we denote by $R_t^{\pi}$ the reward received under the learning policy $\pi$ (which does not know $P, Q(\cdot|m,i)$ initially) in period $t$.
	We follow the literature (see, e.g., \cite{Jaksch2010,  Ortner2014, Agrawal17}) and define its total regret after $T$ periods by
	\begin{align}\label{def:reg}
	\mathcal{R}_T\coloneqq T\rho^* - \sum_{t=1}^T R_t^{\pi}.
	\end{align}
	Our goal is to design a learning algorithm with theoretical guarantees including high probability and expectation bounds (sublinear in $T$) on the total regret.

	Without loss of generality we consider Bernoulli rewards with mean $\mu_{m,i} \in (0,1)$ for all $m, i$.
	Hence $\mu_{m,i}$ characterizes the distribution $Q(\cdot|m,i)$. Our analysis holds generally for random rewards with discrete finite support.
	In addition, we impose the following assumptions.
	\begin{assumption}\label{assum:Markov Chain}
		The transition matrix $P$ of the Markov chain $\{M_t\}$  is invertible.
	\end{assumption}
	\begin{assumption}\label{assum:reward_matrix2}
		The mean reward matrix $\bm \mu= (\mu_{m,i})$ has full row rank.
	\end{assumption}

	\begin{assumption}\label{assum:reward_matrix3}
		The smallest element of the transition matrix $\epsilon\coloneqq \min_{i,j \in \mathcal{M}} P_{ij}>0$.
	\end{assumption}
	The first two assumptions are required for the finite-sample guarantee of spectral estimators for HMMs \cite{Anandkumar2012,Anandkumar2014}.
	The third assumption is required for controlling the belief error of the hidden states by the state-of-art methods; See \cite{DeCastro2017} and our Proposition~\ref{prop:lip_bt} in Section~\ref{sec:proof1}. We next reformulate the POMDP as a belief MDP.



	\subsection{Reduction of POMDP to Belief MDP}\label{subsec:belief MDP}
	To present our learning algorithm and analyze the regret, we first investigate the POMDP problem faced by the oracle where parameters $P$, $\bm \mu$ (equivalently $Q$ for Bernoulli rewards) are known with unobserved states $M_t$.
	Based on the historical observed history, the oracle forms a belief of the underlying state.
	The belief can be encoded by a $M$-dimension vector $b_t =\left(b_t(1),\dots,b_t(M)\right)\in \mathcal B$:
	\begin{align}\label{def:bt}
	b_t(m)\coloneqq \mathbb{P}(M_t=m|I_1, \cdots, I_{t-1}, R_1, \cdots, R_{t-1}),
	\end{align}
	where $\mathcal{B} \coloneqq \left\{b\in \mathbb{R}_+^{M}: \sum_{m=1}^{M}b(m)=1 \right\}.$
	It is well known that the POMDP of the oracle can be seen as a MDP built on a continuous state space $\mathcal{B}$, see \cite{Krish2016}.

	We next introduce a few notations that facilitate the analysis.
	For notation simplicity, we write $c(m,i)\coloneqq \mu_{m,i}$.
	Given belief $b\in\mathcal B$,
	the expected reward of arm $i$ is $\bar{c}(b,i)\coloneqq \sum_{m=1}^{M}c(m,i)b(m).$
	In period $t$, given belief $b_t=b, I_t=i, R_t=r $, by Bayes' theorem, the belief $b_{t+1}$ is updated by $b_{t+1}=H_{\bm{\mu},P}(b,i,r)$,
	where the $m$-th entry is
	$$b_{t+1}(m)=\frac{\sum \limits_{m'}P(m',m)\cdot (\bm{\mu}_{m',i})^r(1-\bm{\mu}_{m',i})^{1-r}\cdot b_t(m')}{\sum \limits_{m''}(\bm{\mu}_{m'',i})^r(1-\bm{\mu}_{m'',i})^{1-r}\cdot b_t(m'')}.$$
	It is obvious that the forward function $H$ depends on the transition matrix $P$ and the reward matrix $\bm \mu$ . We can also define the transition probability of the belief state conditional on the arm pulled: $\bar{T}(\cdot|b,i):=\Prob(b_{t+1} \in \cdot|b,i)$, where $b_{t+1}$ is random due to the random reward.

	The long-run average reward of the infinite-horizon belief MDP following policy $\pi$  given the initial belief $b$ can be written as
	$\rho^\pi_b:= \limsup_{T \to \infty}\frac{1}{T}  \E [\sum_{t=1}^{T} R_t^{\pi} |b_1 =b]$.
	The optimal policy maximizes $\rho^\pi_b$ for a given $b$.
	We use $\rho^{*}\coloneqq \sup_{b}\sup_{\pi}\rho_b^\pi$
	to denote the optimal long-run average reward.
	Under this belief MDP formulation, for all $b\in \B$, the Bellman equation states that
	\begin{align}\label{equ:Bellman-thm}
	\rho^*+v(b)=\max_{i\in\I}\left[\bar{c}(b,i)+\int_{\B}\bar{T}(db'|b,i)v(b')\right],
	\end{align}
	where $v:\B\mapsto\R$ is the bias function.
	It can be shown (see the proof of Proposition~\ref{prop:span-uni-bound} in Appendix) that under our assumptions, $\rho^*$ and $v(b)$ are well defined and there exists a stationary deterministic optimal policy $\pi^*$ which maps a belief state to an arm to pull (an action that maximizes the right side of \eqref{equ:Bellman-thm}). Various (approximate) methods have been proposed to solve
	\eqref{equ:Bellman-thm} and to find the optimal policy for the belief MDP, see e.g. \cite{Yu2004, saldi2017asymptotic, sharma2020approximate}.
	In this work, we do not focus on this planning problem for a known model, and we assume the access to an optimization oracle that solve \eqref{equ:Bellman-thm} and returns the optimal average reward $\rho^*$ and the optimal stationary policy for a given known model.





	\section{The SEEU Algorithm}
	This section describes our learning algorithm for the regime switching MAB model: the Spectral Exploration and Exploitation with UCB (SEEU) algorithm.

	To device a learning policy for the POMDP with unknown $\bm \mu $ and $P$,
	one needs a procedure to estimate those quantities from observed rewards.
	\cite{Anandkumar2012,Anandkumar2014} propose the so-called spectral estimator for the unknown parameters in hidden Markov models (HMMs).
	However, the algorithm is not directly applicable to ours, because there is no decision making in HMMs.
	To use the spectral estimator,
	we divide the learning horizon $T$ into nested ``exploration'' and ``exploitation'' phases.
	In the exploration phase, we randomly select an arm in each period.
	This transforms the system into a HMM so that we can apply the spectral method to estimate $\bm\mu$ and $P$ from the observed rewards in the phase.
	In the exploitation phase, based on the estimators obtained from the exploration phase,
	we use a UCB-type policy to further narrow down the optimal belief-based policy in the POMDP introduced in Section~\ref{subsec:belief MDP}.

	\subsection{Spectral Estimator} \label{sec:spectral}
	We introduce the spectral estimator \cite{Anandkumar2012,Anandkumar2014},
	and adapt it to our setting.



	To simplify the notation, suppose the exploration phase starts from period 1 until period $n$, with realized arms $\{i_1,\dots,i_n\}$, and realized rewards $\{r_1,\dots,r_n\}$ sampled from Bernoulli distributions.
	Recall $I$ is the cardinality of the arm set $\I$,
	then one can create a one-to-one mapping from a pair $(i,r)$ into a scalar $s\in\{1,2,...,2I\}$.
	Therefore, the pair can be expressed as a vector $y\in \{0,1\}^{2I}$ such that in each period $t$, $y_t$ satisfies $\1_{\{y_t=e_s\}}=\1_{\{r_t=r,i_t=i\}}$,
	where $e_s$ is a basis vector with its $s$-th element being one and zero otherwise.
	Let $A\in \R^{2I\times M}$ be the observation probability matrix conditional on the state: $A(s,m)=\Prob(R_t=r,I_t=i|M_t=m).$
	It can be shown that $A$ satisfies
	$\E[y_t|M_t=m]=Ae_m$, and $\E[y_{t+1}|M_t=m]=AP^Te_m.$
	For three consecutive observations $y_{t-1},y_{t},y_{t+1}$, define
	\begin{align}\label{def:statistics}
	&\widetilde{y}_{t-1}\coloneqq\E[y_{t+1}\otimes y_t]\E[y_{t-1}\otimes y_t]^{-1}y_{t-1},
	&&\widetilde{y}_{t}\coloneqq\E[y_{t+1}\otimes y_{t-1}]\E[y_{t}\otimes y_{t-1}]^{-1}y_{t},\\
	&M_2\coloneqq\E[\widetilde{y}_{t-1}\otimes\widetilde{y}_{t}],
	&&M_3\coloneqq\E[\widetilde{y}_{t-1}\otimes\widetilde{y}_{t}\otimes y_{t+1}],
	\end{align}
	where $\otimes$ represents the tensor product.


	From the observations $\{y_1,y_2,\dots,y_n\}$, we may construct the estimations $\hat{M}_2$ and $\hat{M}_3$ for $M_2$ and $M_3$ respectively, and apply the tensor decomposition to obtain the estimator $\hat{\bm\mu}$ for the unknown mean reward matrix and $\hat{P}$ for the transition matrix. This procedure is summarized in Algorithm~\ref{alg:example}.
	We use $\hat{A}_m$ (respectively $\hat{B}_m$) to denote the $m$-th column vector of $\hat{A}$ (respectively $\hat{B}$).


	\begin{algorithm}[htb!]
		\caption{The subroutine to estimate $\bm \mu$ and $P$ from the observations from the exploration phase.}
		\label{alg:example}
		\begin{algorithmic}[1] 
			\REQUIRE sample size $n$, $\{y_1,y_2,\dots,y_n\}$ created from the rewards $\{r_1,\dots,r_n\}$ and arms $\{i_1,\dots,i_n\}$
			\ENSURE The estimation $\hat{\bm\mu},\hat{P}$
			\STATE For $i,j\in\{-1,0,1\}$: compute $\hat{W}_{i,j}=\frac{1}{N-2}\sum_{t=2}^{N-1}y_{t+i}\otimes y_{t+j}$.
			\STATE For $t=2,\dots,n-1$: compute $\hat{y}_{t-1}\coloneqq\hat{W}_{1,0}(\hat{W}_{-1,0})^{-1}y_{t-1}$, $\hat{y}_{t}\coloneqq\hat{W}_{1,-1}(\hat{W}_{0,-1})^{-1}y_{t}$.
			\STATE Compute
			$\hat{M}_2\coloneqq\frac{1}{N-2}\sum_{t=2}^{N-1}\hat{y}_{t-1}\otimes \hat{y}_{t}$,
			$\hat{M}_3\coloneqq\frac{1}{N-2}\sum_{t=2}^{N-1}\hat{y}_{t-1}\otimes \hat{y}_{t}\otimes  y_{t+1}$.
			\STATE Apply tensor decomposition (\cite{Anandkumar2014}):\\ $\hat{B}=\textbf{TensorDecomposition}(\hat{M}_2,\hat{M}_3)$.
			\STATE Compute $\hat{A}_m=\hat{W}_{-1,0}(\hat{W}_{1,0})^ \dagger \hat{B}_m$ for each $m\in\mathcal{M}$.
			\STATE Return $m$th row vector $(\hat{\bm\mu})^{m}$ of  $\hat{\bm\mu}$ from $\hat{A}_m$ .
			\STATE Return $\hat{P}= (\hat{A}^ \dagger \hat{B})^{\top}$ ($\dagger$ represents the pseudoinverse of a matrix)
		\end{algorithmic}
	\end{algorithm}
	In addition, we can obtain the following result from \cite{Azizzadenesheli2016} and it provides the confidence regions of the estimators in Algorithm~\ref{alg:example}.

	\begin{proposition}\label{prop:spectral}
		Under Assumptions \ref{assum:Markov Chain} and \ref{assum:reward_matrix2}, for any $\delta \in (0,1)$ and any initial distribution, there exists $N_0$ such that when $n \geq N_0$,
		with probability $1-\delta$, the estimated $ \hat{\bm\mu}$ and $\hat{P}$ by Algorithm~\ref{alg:example} satisfy
		\begin{align}
		||(\bm \mu)^{m}- (\hat{\bm\mu})^{m}||_2 &\leq C_1\sqrt{\frac{\log(6\frac{S^2+S}{\delta})}{n}}, \quad m \in \mathcal{M},\\
		||P - \hat{P}||_2&\leq C_2\sqrt{\frac{\log(6\frac{S^2+S}{\delta})}{n}}. \label{def:confidence bound}
		\end{align}
		where $(\bm \mu)^m$ and $(\hat{\bm \mu})^m$ are the $m$-th row vectors of $\bm\mu$ and $\hat{\bm \mu}$, respectively. Here, $S=2I$, and $C_1$, $C_2$ are constants independent of $n$.
	\end{proposition}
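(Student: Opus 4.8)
The plan is to recognize that Proposition~\ref{prop:spectral} is a translation of the finite-sample spectral-estimation guarantee of \cite{Azizzadenesheli2016} into our notation, so the main work is to verify that the exploration phase of our model satisfies the hypotheses of that result and that its conclusion delivers the stated bounds on $\bm\mu$ and $P$. First I would establish the reduction to a hidden Markov model: during exploration the arm $I_t$ is drawn at random independently of the hidden state and of the history, so the encoded observation $y_t\in\{0,1\}^{2I}$ recording the pair $(I_t,R_t)$ is the emission of an HMM whose latent chain is $\{M_t\}$ with transition matrix $P$ and whose emission law is the observation matrix $A$ defined by $A(s,m)=\Prob(R_t=r,I_t=i\mid M_t=m)$. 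The identities $\E[y_t\mid M_t=m]=Ae_m$ and $\E[y_{t+1}\mid M_t=m]=AP^\top e_m$ recorded in the text confirm that the multiview structure exploited by the spectral method is present.

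Second, I would verify the non-degeneracy conditions that make the tensor decomposition well-posed and its perturbation analysis applicable. Assumption~\ref{assum:Markov Chain} supplies invertibility of $P$ directly, and Assumption~\ref{assum:reward_matrix2} (full row rank of $\bm\mu$), together with the fact that the randomized arm selection assigns positive probability to every arm, forces the columns of $A$ to be linearly independent, i.e.\ $A$ has full column rank $M$. These are exactly the identifiability hypotheses under which \cite{Anandkumar2012,Anandkumar2014} and \cite{Azizzadenesheli2016} guarantee that $M_2$ and $M_3$ admit a unique (up to permutation) decomposition whose factors encode the columns of $A$ and the rows of $P$.

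Third, with the reduction and identifiability in place, I would invoke the concentration analysis underlying the cited result. The observations are bounded indicator vectors, so the empirical moments $\hat{W}_{i,j}$ concentrate around their population counterparts at rate $\sqrt{\log(1/\delta)/n}$, and a union bound over the $O(S^2)$ scalar entries produces the $\log(6(S^2+S)/\delta)$ factor. Standard matrix-inversion and pseudoinverse perturbation bounds then propagate this error through the construction of $\hat{y}_{t-1},\hat{y}_t$ and hence to $\hat{M}_2,\hat{M}_3$, after which the robust tensor power method of \cite{Anandkumar2014} transfers the moment error linearly to the recovered factors $\hat{B}_m$, and the read-off formulas in lines~5--7 of Algorithm~\ref{alg:example} yield $\hat{\bm\mu}$ and $\hat{P}$. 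All dependence on the conditioning of $A$ and $P$ and on the mixing of $\{M_t\}$ is absorbed into the constants $C_1,C_2$, and the threshold $N_0$ is the sample size beyond which the empirical moments lie inside the stability radius of these perturbation bounds. Since each Bernoulli mean $\mu_{m,i}$ is a linear functional of the corresponding column of $A$, the column-wise bound on $A$ transfers to the row-wise bound on $\bm\mu$ claimed in the statement.

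The hardest part will be the perturbation analysis of the tensor decomposition step: unlike the linear operations, the map from $(\hat{M}_2,\hat{M}_3)$ to the recovered factors is nonlinear, and controlling it requires the robust power-method guarantees whose error constants depend delicately on the spectral gap and on the smallest singular values determined by Assumptions~\ref{assum:Markov Chain}--\ref{assum:reward_matrix2}. A secondary subtlety is that spectral recovery is only up to a permutation of the latent labels, so the bounds are to be understood after matching $\hat{\bm\mu},\hat{P}$ to the true parameters under the optimal relabeling of states; since everything downstream treats the hidden states symmetrically, this ambiguity is harmless.
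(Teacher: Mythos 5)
Your proposal is correct and follows essentially the same route as the paper: the paper's own proof (Section~\ref{sec:proof-prop-spectral}) likewise reduces the exploration data to an HMM with observation matrices $A_1,A_2,A_3$, verifies the rank/invertibility hypotheses via Assumptions~\ref{assum:Markov Chain} and \ref{assum:reward_matrix2}, and then directly invokes Lemmas 5, 8 and Theorem 3 of \cite{Azizzadenesheli2016} to obtain explicit expressions for $N_0$, $C_1$, $C_2$ in terms of the mixing parameters $G,\theta$, the stationary distribution minimum $\omega_{\min}$, and the singular values $\sigma_{1,-1}$, $\sigma_{\min}(A_i)$ — exactly the quantities you identify as being absorbed into the constants. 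The only difference is that the paper makes the constants fully explicit rather than sketching the perturbation chain, so your outline matches its substance.
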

	The expressions of constants $N_0, C_1, C_2$ are given in Section~\ref{sec:proof-prop-spectral} in the appendix. Note that parameters ${\bm\mu},{P}$ are identifiable up to permutations of the hidden states \cite{Azizzadenesheli2016}.

	\subsection{The SEEU Algorithm}
	The SEEU algorithm proceeds in episodes of increasing length, similar to UCRL2 in \cite{Jaksch2010}.
	As mentioned before, each episode is divided into exploration and exploitation phases.
	In episode $k$, it starts with the exploration phase that lasts for a fixed number of periods $\tau_1$, and
	the algorithm uniformly randomly chooses an arm and observes the rewards.
	After the exploration phase, the algorithm applies Algorithm~\ref{alg:example} to (re-)estimate $\bm\mu$ and $P$.
	Moreover, it constructs a confidence interval based on Proposition~\ref{prop:spectral} with a confidence level $1-\delta_k$, where $\delta_k\coloneqq\delta/k^3$ is a vanishing sequence.
	Then the algorithm enters the exploitation phase.
	Its length is proportional to $\sqrt{k}$.
	In the exploitation phase, it conducts UCB-type learning: the arm is pulled according to a policy that corresponds to the optimistic estimator of $\bm \mu$ and $P$ inside the confidence interval.
	The detailed steps are listed in Algorithm \ref{alg:SEEU}.  


	\begin{algorithm}[h!]
		\caption{The SEEU Algorithm}
		\label{alg:SEEU}
		\begin{algorithmic}[1]
			\REQUIRE Initial belief $b_1$, precision $\delta$, exploration parameter $\tau_1$, exploitation parameter $\tau_2$
			\FOR {$k=1,2,3,\dots$}
			\STATE Set the start time of episode $k$, $t_k\coloneqq t$
			\FOR{$t=t_k,t_k+1,\dots,t_k+\tau_1$}
			\STATE Uniformly randomly select an arm: $\Prob(I_t=i)=\frac{1}{I}$
			\ENDFOR
			\STATE\label{step:all-samples-exploration} Input the realized actions and rewards in all previous exploration phases
			$\hat{\mathcal{I}}_k \coloneqq \{i_{t_1:t_1+\tau_1},\cdots,i_{t_k:t_k+\tau_1}
			\}$ and
			$\hat{\mathcal{R}}_k \coloneqq \{r_{t_1:t_1+\tau_1},\cdots,r_{t_k:t_k+\tau_1}
			\}$ to Algorithm~\ref{alg:example} to compute
			$\hat{\bm\mu}_k,\hat{P}_k=\textbf{Spectral Estimation}(\hat{\mathcal{I}}_k, \hat{\mathcal{R}}_k)$
			\STATE Compute the confidence interval $\mathcal{C}_k(\delta_k)$ from \eqref{def:confidence bound} using the confidence level $1-\delta_k=1-\frac{\delta}{k^3}$ such that $\Prob\{(\bm{\mu},P)\in\mathcal{C}_k(\delta_k)\}\geq 1-\delta_k$
			\STATE\label{step:optimistic-pomdp} Find the optimistic POMDP  in the confidence interval\\
			\centerline{$(\bm\mu_k,P_k)=\argmax_{(\bm\mu,P)\in\mathcal{C}(\delta_k)}\rho^*(\bm\mu,P)$}
			\FOR{$t=1,2,\dots,t_k+\tau_1$}
			\STATE Update belief $b_t^k$ to $b_{t+1}^k=H_{\bm{\mu}_k,P_k}(b_t^k,i_t,r_t)$ under the new parameters $(\bm\mu_k,P_k)$
			\ENDFOR
			\FOR{$t=t_k+\tau_1+1,\dots,t_k+\tau_2\sqrt{k}$}
			\STATE Execute the optimal policy $\pi^{(k)}$ by solving the Bellman equation \eqref{equ:Bellman-thm} under parameters $(\bm\mu_k, P_k)$: $i_t=\pi^{(k)}(b_t^k)$
			\STATE Observe reward $r_t$ \label{step:optimal-pomdp-policy}
			\STATE Update the belief at $t+1$ by
			$b_{t+1}^k=H_{\bm\mu_k,P_k}(b_t^k,i_t,r_t)$
			\ENDFOR
			\ENDFOR
		\end{algorithmic}
	\end{algorithm}



	\subsection{Discussions on the SEEU Algorithm}
	\textbf{Computations.}
	For given parameters $(\bm\mu,P)$, we need to compute the optimal average reward $\rho^*(\bm\mu,P)$ that depends on the parameters (Step \ref{step:optimistic-pomdp} in Algorithm \ref{alg:SEEU}). Various computational and approximation methods have been proposed to tackle this planning problem for belief MDPs as mentioned in Section~\ref{subsec:belief MDP}.
	In addition, we need
	to find out the optimistic POMDP in the confidence region $\mathcal{C}_k(\delta_k)$ with the best average reward. For low dimensional models,  one can discretize $\mathcal{C}_k(\delta_k)$ into grids and calculate the corresponding optimal average reward $\rho^*$ at each grid point so as to find (approximately) the optimistic model $(\bm\mu_k, P_k)$.
	However, in general it is not clear whether there is an efficient computational method to find the optimistic plausible POMDP model in the confidence region when the unknown parameters are high-dimensional. This issue is also present in recent studies on learning continuous-state MDPs with the upper confidence bound approach, see e.g. \cite{Lakshmanan2015} for a discussion. In our regret analysis below, we do not take into account approximation errors arising from the computational aspects discussed above, as in \cite{Ortner2012, Lakshmanan2015}.
	To implement the algorithm efficiently (especially for high-dimensional models) remains an intriguing  future direction.

	\textbf{Dependence on the unknown parameters.}
	When computing the confidence region in Step~7 of Algorithm~\ref{alg:SEEU}, the agent needs the information of the constants $C_1$ and $C_2$ in Proposition~\ref{prop:spectral}. These constants depend on a few ``primitives'' that can be hard to know, for example, the mixing rate of the underlying Markov chain.
	However we only need upper bounds for $C_1$ and $C_2$ for the theoretical guarantee, and hence a rough and conservative estimate would be sufficient. Such dependence on some unknown parameters is common in learning problems, and one remedy is to dedicate the beginning of the horizon to estimate the unknown parameters, which typically doesn't increase the rate of the regret.
	Alternatively, $C_1$ and $C_2$ can be replaced by parameters that are tuned by hand.
	See Remark 3 of \cite{Azizzadenesheli2016} for a further discussion on this issue.

	\section{Regret Bound}
	We now give the regret bound for the SEEU algorithm.
	The first result is a high probability bound.


	\begin{theorem}\label{thm:upper_bound}
		Suppose Assumptions 1 to 3 hold. Fix the parameter $\tau_1$ in Algorithm \ref{alg:SEEU} to be sufficiently large. Then there exist constants
		$T_0,C$ which are independent of $T$, such that
		when $T>T_0$, with probability $1-\frac{7}{2}\delta$,
		the regret of Algorithm~\ref{alg:SEEU} satisfies
		\begin{align}
		\mathcal{R}_T& \leq CT^{2/3}\sqrt{\log\left(\frac{3(S+1)}{\delta}T\right)}+T_0\rho^*,
		\end{align}
		where $S=2I$ and $\rho^*$ denotes the optimal long-run average reward under the true model.
	\end{theorem}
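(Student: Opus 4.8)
The plan is to split the regret into exploration and exploitation contributions and to treat the exploitation phases by a UCRL2-style optimism argument adapted to the continuous belief MDP. First I would count episodes: since episode $k$ lasts $\tau_1+\tau_2\sqrt{k}$ periods, the cumulative length through episode $K$ is $\Theta(\tau_1 K+\tau_2 K^{3/2})$, so the number of episodes completed by time $T$ satisfies $K=\Theta(T^{2/3})$. The exploration phases contribute at most $K\tau_1=O(T^{2/3})$ to the regret, because each exploratory period has per-period regret bounded by $\rho^*\le 1$. It then remains to bound the exploitation regret by $O(T^{2/3}\sqrt{\log T})$.

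For the exploitation analysis I would define the good event $G$ on which the true parameters $(\bm\mu,P)$ lie in every confidence set $\mathcal{C}_k(\delta_k)$. By Proposition~\ref{prop:spectral} and a union bound, using $\delta_k=\delta/k^3$ and $\sum_k k^{-3}\le \tfrac{3}{2}$, the event $G$ holds with probability at least $1-\tfrac{3}{2}\delta$; taking $\tau_1$ large guarantees $k\tau_1\ge N_0$ so the proposition applies from the first episode. On $G$, optimism (Step~\ref{step:optimistic-pomdp}) gives $\rho_k^*:=\rho^*(\bm\mu_k,P_k)\ge\rho^*$, so I may replace $\rho^*$ by $\rho_k^*$ in each exploitation period and invoke the Bellman equation \eqref{equ:Bellman-thm} for the optimistic model to write
$$\rho_k^*-r_t=\bigl[\bar{c}_k(b_t^k,i_t)-r_t\bigr]+\Bigl[\int_{\B}\bar{T}_k(db'|b_t^k,i_t)v_k(b')-v_k(b_t^k)\Bigr].$$
I would split the reward bracket as $\bar{c}_k(b_t^k,i_t)-\bar{c}(\tilde b_t,i_t)$ plus the martingale noise $\bar{c}(\tilde b_t,i_t)-r_t$, where $\tilde b_t$ is the belief under the true parameters; the first difference is controlled by the parameter error $\|\bm\mu_k-\bm\mu\|$ together with the belief error $\|b_t^k-\tilde b_t\|_1$. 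The transition bracket I would telescope by adding and subtracting $v_k(b_{t+1}^k)$, producing a one-step model-error term bounded by $\mathrm{sp}(v_k)$ times the total-variation gap between the optimistic belief transition and the true reward dynamics, plus a martingale difference, with the telescoped phase-boundary terms contributing at most $\mathrm{sp}(v_k)$ each.

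The two structural ingredients that make every term summable are the uniform span bound $\mathrm{sp}(v_k)\le H$ from Proposition~\ref{prop:span-uni-bound} and the belief-error bound from Proposition~\ref{prop:lip_bt}. Because the belief is recalibrated from the start of the horizon under $(\bm\mu_k,P_k)$ and Assumption~\ref{assum:reward_matrix3} forces exponential forgetting of the initial condition, the belief error obeys $\|b_t^k-\tilde b_t\|_1=O(\eta_k)$ uniformly over the exploitation phase, where $\eta_k:=\sqrt{\log(k/\delta)/(k\tau_1)}$ is the estimation accuracy of Proposition~\ref{prop:spectral}; crucially it does not accumulate over time. Hence the per-period model-error contributions are $O(H\eta_k)$, and summing over the $\Theta(\sqrt{k})$ exploitation periods of episode $k$ and then over episodes gives
$$\sum_{k=1}^{K}O\!\left(H\eta_k\tau_2\sqrt{k}\right)=O\!\left(\frac{H\tau_2}{\sqrt{\tau_1}}\sum_{k=1}^{K}\sqrt{\log(k/\delta)}\right)=O\!\left(HK\sqrt{\log(K/\delta)}\right)=O\!\left(T^{2/3}\sqrt{\log(T/\delta)}\right).$$
The two martingale pieces (reward centering and bias centering), having increments bounded by $2H$, are each concatenated across all exploitation periods of total length $\Theta(T)$ and controlled by Azuma--Hoeffding as $O(H\sqrt{T\log(1/\delta)})$, which is dominated; the boundary terms sum to $O(HK)=O(T^{2/3})$. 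Adding the two Azuma failure probabilities ($\delta$ each) to the $\tfrac{3}{2}\delta$ from $G$ accounts for the stated $\tfrac{7}{2}\delta$, while the $T_0\rho^*$ term absorbs the periods before the sample-size threshold is met.

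The hardest part will be the belief-error control underlying Proposition~\ref{prop:lip_bt} and the uniform span bound of Proposition~\ref{prop:span-uni-bound}: one must show that although every past belief update uses the estimated parameters, the induced error neither accumulates nor explodes, and that the bias function of a continuous-state belief MDP has span bounded uniformly over all plausible models in the confidence region---precisely where standard H\"older-continuity arguments fail and the vanishing-discount approach together with the Lipschitz tools of \cite{Hinderer2005} is required.
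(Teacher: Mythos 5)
Your proposal is correct and follows essentially the same route as the paper's proof: the identical episode count $K=\Theta(T^{2/3})$, the $K\tau_1\rho^*$ exploration bound, the good event via $\delta_k=\delta/k^3$ with failure mass $\tfrac{3}{2}\delta$, optimism plus the Bellman equation for the optimistic model, telescoping with the uniform span bound of Proposition~\ref{prop:span-uni-bound}, belief-error control via Proposition~\ref{prop:lip_bt} combined with Proposition~\ref{prop:spectral} to get the non-accumulating $O(\eta_k)$ per-period error, two Azuma--Hoeffding applications ($\delta$ each) yielding the $\tfrac{7}{2}\delta$ total, and $T_0\rho^*$ absorbing the pre-threshold periods. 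One phrasing to tighten: the transition-error bracket cannot literally be bounded by $\mathrm{sp}(v_k)$ times a total-variation gap, since $\bar{T}_k(\cdot|b,i)$ and $\bar{T}(\cdot|b,i)$ are supported on \emph{different} belief points so their TV distance need not be small; as in the paper, you must split it into a reward-probability difference (paired with $\|v_k\|_\infty\le D/2$) plus a forward-kernel difference handled through the Lipschitz continuity of $v_k$ together with Proposition~\ref{prop:lip_bt} --- which your invocation of that proposition effectively supplies.
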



	The constant $T_0$ measures the number of periods needed for the sample size in the exploration phases to exceed $N_0$ arising in Proposition~\ref{prop:spectral}.
	The constant $C$ has the following expression:
	\begin{align}
	C&=3\sqrt{2}\left[\left(D+1+\left(1+\frac{D}{2}\right)L_1\right)M^{3/2}C_1+ \left(1+\frac{D}{2}\right)L_2M^{1/2}C_2\right]\tau_2^{1/3}\tau_1^{-1/2}\\
	& \quad + 3\tau_2^{-2/3}(\tau_1\rho^*+D)+(D+1) \sqrt{2 \ln(\frac{1}{\delta})}.
	\end{align}
	Here, $M$ is the number of hidden states, $C_1, C_2$ are given in Proposition~\ref{prop:spectral}, $D$ is a uniform bound on the bias span given in Proposition~\ref{prop:span-uni-bound}, and $L_1, L_2$ are given in Proposition~\ref{prop:lip_bt}. For technical reasons we require $\tau_1$ to be large so as to apply Proposition~\ref{prop:spectral} from the first episode. The impact of $\tau_1$ on the regret will be studied numerically in Section~\ref{sec:numerical}.




	\begin{remark} [Dependancy of $C$ on model parameters]
		The dependency of $C$ on $C_1$ and $C_2$ is directly inherited from the confidence bounds in Proposition~\ref{prop:spectral}, where the constants
		depend on arm similarity, the mixing time and the stationary distribution $\omega$ of the Markov chain (see also \cite{Azizzadenesheli2016}).
		In addition, $C$ depends on $L_1, L_2$ (and hence $\epsilon$, the minimum entry of $P$), which arises from controlling the propagated error in belief in hidden Markov models (Proposition~\ref{prop:lip_bt}; see also \cite{DeCastro2017}).
		Finally, $C$ depends on the bound $D$ of the bias span (see also \cite{Jaksch2010}) in Proposition~\ref{prop:span-uni-bound}. The constant $C$ may not be tight, and its dependence on some parameters may be just an artefact of our proof, but it is the best bound we can obtain.
	\end{remark}

	From Theorem~\ref{thm:upper_bound}, we can choose an appropriate $\delta=\frac{3(S+1)}{T}$ and readily obtain the following expectation bound on the regret. The proof is omitted.
	\begin{theorem}\label{thm:expected_upper_bound}
		Under the same assumptions as Theorem~\ref{thm:upper_bound}, the regret of Algorithm~\ref{alg:SEEU} satisfies
		\begin{align}
		\E[\mathcal{R}_T]& \leq CT^{2/3}\sqrt{2\log T} + (T_0+11(S+1))\rho^*.
		\end{align}
	\end{theorem}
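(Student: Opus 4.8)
The plan is to derive the expectation bound directly from the high-probability bound in Theorem~\ref{thm:upper_bound} via a standard event decomposition, with $\delta$ chosen to balance the two contributions. Let $\mathcal{G}$ denote the ``good'' event on which the conclusion of Theorem~\ref{thm:upper_bound} holds; by that theorem $\Prob(\mathcal{G}) \geq 1 - \tfrac{7}{2}\delta$, and on $\mathcal{G}$ we have $\mathcal{R}_T \leq CT^{2/3}\sqrt{\log(\tfrac{3(S+1)}{\delta}T)} + T_0\rho^*$. The first ingredient I would establish is a crude almost-sure upper bound valid on the complementary failure event $\mathcal{G}^c$: since the rewards are Bernoulli and hence $R_t^\pi \geq 0$, the definition \eqref{def:reg} gives $\mathcal{R}_T = T\rho^* - \sum_{t=1}^T R_t^\pi \leq T\rho^*$ deterministically. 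This worst-case estimate is what controls the contribution of the low-probability failure event.

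Next I would split the expectation as $\E[\mathcal{R}_T] = \E[\mathcal{R}_T \1_{\mathcal{G}}] + \E[\mathcal{R}_T \1_{\mathcal{G}^c}]$ and bound each piece separately: the first by the Theorem~\ref{thm:upper_bound} estimate (using $\Prob(\mathcal{G}) \leq 1$), the second by $T\rho^* \cdot \Prob(\mathcal{G}^c) \leq \tfrac{7}{2}\delta\, T\rho^*$. This yields
\begin{align}
\E[\mathcal{R}_T] \leq CT^{2/3}\sqrt{\log\Big(\tfrac{3(S+1)}{\delta}T\Big)} + T_0\rho^* + \tfrac{7}{2}\delta\, T\rho^*.
\end{align}

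The final step is the choice $\delta = \tfrac{3(S+1)}{T}$, which simultaneously simplifies the logarithm and renders the failure term $O(1)$. Indeed $\tfrac{3(S+1)}{\delta}T = T^2$, so the log factor becomes $\sqrt{\log(T^2)} = \sqrt{2\log T}$, while $\tfrac{7}{2}\delta\, T\rho^* = \tfrac{7}{2}\cdot 3(S+1)\rho^* = \tfrac{21}{2}(S+1)\rho^* \leq 11(S+1)\rho^*$. Combining this with the $T_0\rho^*$ term gives exactly the claimed bound $CT^{2/3}\sqrt{2\log T} + (T_0 + 11(S+1))\rho^*$. Since the argument is essentially bookkeeping, there is no serious obstacle; the only points needing care are checking that the chosen $\delta$ lies in $(0,1)$ (equivalently $T > 3(S+1)$, which is subsumed in the regime $T > T_0$ already assumed) so that $\tfrac{7}{2}\delta \leq 1$ and the probability statement of Theorem~\ref{thm:upper_bound} remains non-vacuous.
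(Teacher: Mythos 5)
Your proposal is correct and is precisely the argument the paper intends (the paper omits the proof, noting only the choice $\delta=\frac{3(S+1)}{T}$): condition on the high-probability event of Theorem~\ref{thm:upper_bound}, bound the failure event via the deterministic estimate $\mathcal{R}_T\leq T\rho^*$ from \eqref{def:reg} and nonnegative rewards, and check that $\frac{3(S+1)}{\delta}T=T^2$ gives the $\sqrt{2\log T}$ factor while $\frac{7}{2}\cdot 3(S+1)\rho^*=\frac{21}{2}(S+1)\rho^*\leq 11(S+1)\rho^*$. The only cosmetic caveat, which you inherit from the paper's own statement rather than introduce, is that the constant $C$ of Theorem~\ref{thm:upper_bound} contains the term $(D+1)\sqrt{2\ln(1/\delta)}$ and so is not literally $\delta$-free; with your choice of $\delta$ this term is at most $(D+1)\sqrt{2\ln T}$, and since it multiplies the lower-order $\sqrt{T}$-contribution one can absorb it (replacing $(D+1)\sqrt{2\ln(1/\delta)}$ by $D+1$ in $C$) without affecting the claimed $O(T^{2/3}\sqrt{\log T})$ bound.
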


	\begin{remark}[Lower bound]
		For the lower bound of the regret,
		consider the $I$ problem instances (equal to the number of arms):
		In instance $i$, let $\mu_{m,i} = 0.5+ \bar \epsilon$ for all $m$ for a small positive constant $\bar \epsilon$, and let $\mu_{m,j}=0.5$ for all $m$ and $j\neq i$.
		Such structure makes sure that the oracle policy simply pulls one arm without the need to infer the state.
		Since the problem reduces to the classic MAB, the regret is at least $O(\sqrt{IT})$ in this case.
		Note that the setup of the instances may violate Assumption 2, but this can be easily fixed by introducing an arbitrarily small perturbation to $\bm\mu$.
		The gap between the upper and lower bounds is probably caused by the split of exploration/exploitation phases in our algorithm, which resembles the $O(T^{2/3})$ regret for explore-then-commit algorithms in classic multi-armed bandit problems (see Chapter 6 in \citealt{Lattimore2018}).
		We cannot integrate the two phases because of the following fundamental barrier: the spectral estimator cannot use samples generated from the belief-based policy due to the history dependency.
		This is also why \citet{Azizzadenesheli2016} focus on memoryless policies in POMDPs to apply the spectral estimator.
		Since the spectral estimator is the only method that provides finite-sample guarantees for HMMs, we leave the gap as a future direction of research. Nevertheless, we are not aware of other algorithms that can achieve sublinear regret in our setting.
	\end{remark}

	\begin{remark}[General reward distribution]
		Our model requires the estimation of the reward distribution instead of just the mean to estimate the belief. For discrete rewards taking $O$ possible values, the regret bound holds with $S = OI$ instead of $2I$ for Bernoulli rewards. For continuous reward distributions,
		it might be possible to combine the non-parametric HMM inference method in \cite{DeCastro2017} with our algorithm design to obtain regret bounds. However, this requires different analysis techniques and we leave it for future work.
	\end{remark}

	\section{Numerical Experiment} \label{sec:numerical}
	In this section, we present proof-of-concept experiments. Note that large-scale POMDPs with long-run average objectives (the oracle in our problem) are computationally difficult to solve \cite{chatterjee2019complexity}. On the other hand, while there can be many hidden states in general, often only two or three states are important to model in several application areas, e.g. ``bull market'' and ``bear market'' in finance \cite{dai2010trend}.
	Hence we focus on small-scale experiments, following some recent literature on reinforcement learning for POMDPs \citep{Azizzadenesheli2016, Igl2018}.


	As a representative example, we consider a 2-hidden-state, 2-arm setting with 
	$P= \left[
	\begin{matrix}
	1/3 & 2/3 \\
	3/4 & 1/4
	\end{matrix}
	\right] $
	and
	$\bm\mu= \left[
	\begin{matrix}
	0.9 & 0.1 \\
	0.5 & 0.6
	\end{matrix}
	\right] $,
	where the random reward follows a Bernoulli distribution.
	We compare our algorithm with simple heuristics $\epsilon$-greedy ($\epsilon=0.1$), and non-stationary bandits algorithms including Sliding-Window UCB (SW-UCB) \citep{Garivier2011} with tuned window size and Exp3.S \citep{auer2002nonstochastic} with $L=T$, where the hyperparameter $L$ is the number of changes in their algorithm. In Figure \ref{Fig}(a), we plot the average regret versus $T$ of the four algorithms in log-log scale, where the number of runs for each algorithm is 50.
	We observe that the slopes of all algorithms except for SEEU are close to one, suggesting that they incur linear regrets.
	This is expected, because these algorithms don't take into account the hidden states.
	On the other hand, the slope of SEEU is close to $2/3$. This is consistent with our theoretical result (Theorem~\ref{thm:expected_upper_bound}). Similar observations are made on other small-scale examples.
	This demonstrates the effectiveness of our SEEU algorithm, particularly when the horizon length $T$ is relatively large. 

	We also briefly discuss the impact of parameters $\tau_1$ and $\tau_2$ on the algorithm performance.
	For the example above, we calculate the average regret for several pairs of parameters $(\tau_1, \tau_2)$.
	It can be seen that the choices of these parameters do not affect the order $O(T^{2/3})$ of the regret (the slope).
	See Figure~\ref{Fig}(b) for an illustration.

	\begin{figure}
		\centering
		\subfigure{
			\includegraphics[width=7.2cm, height=5.2cm]{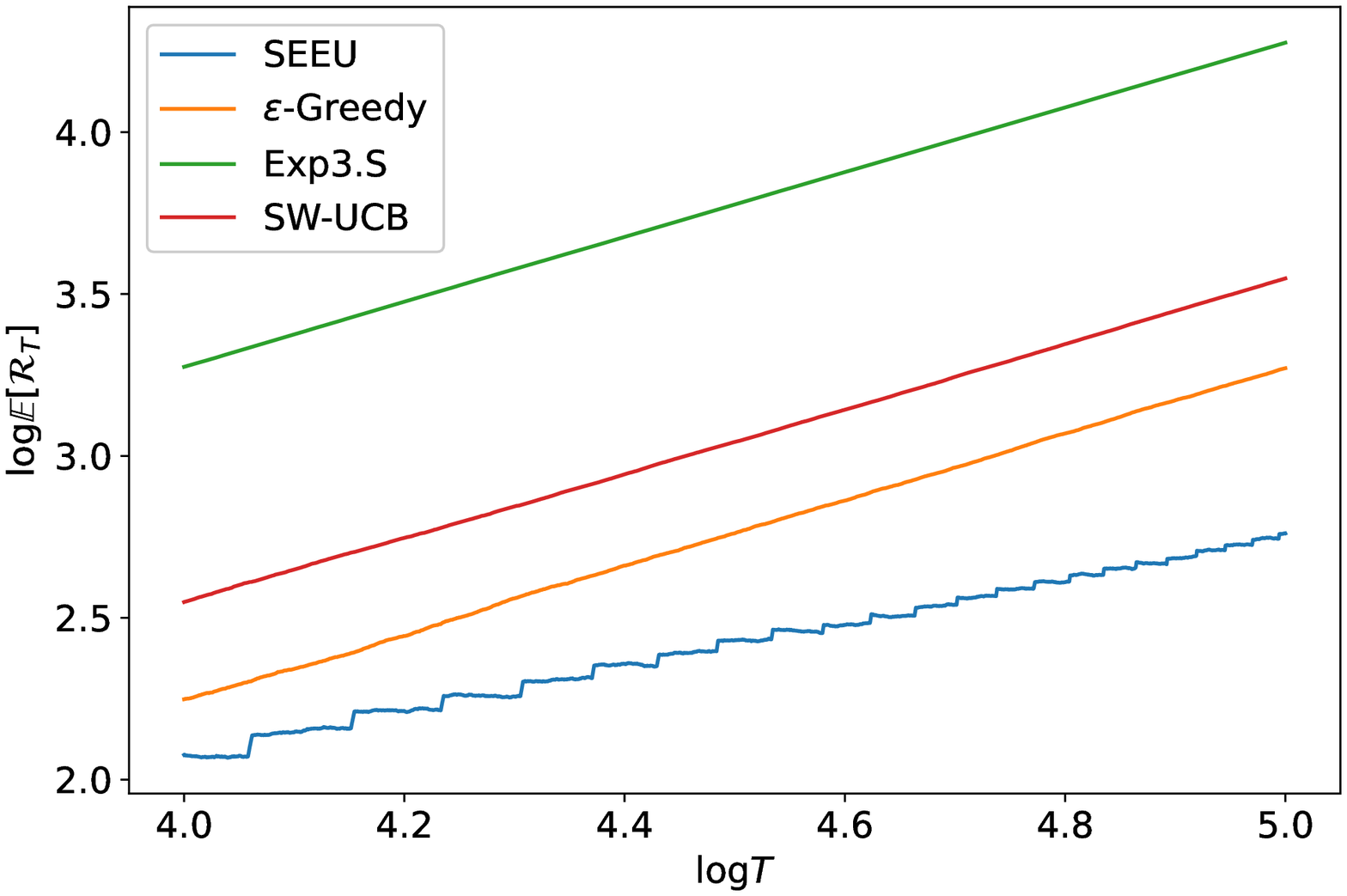} }
		\subfigure{
			\includegraphics[width=7.2cm, height=5.2cm]{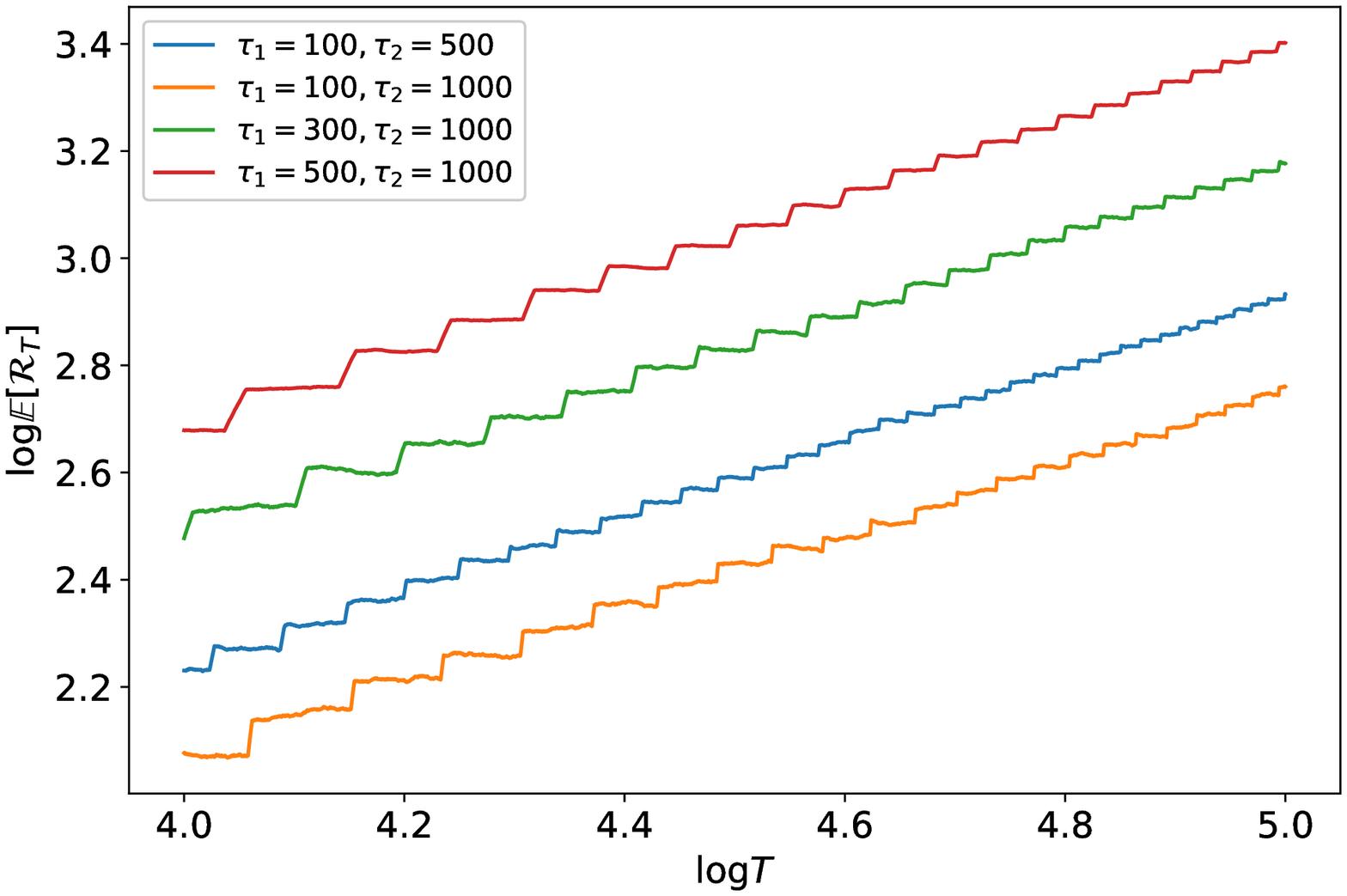}}
		\caption{(a) Regret comparison of four algorithms; (b) Effect of $(\tau_1, \tau_2)$ on the regret.}
		\label{Fig}
	\end{figure}

	\section{Analysis: Proof Sketch for Theorem \ref{thm:upper_bound}} \label{sec:proof1}



	We need the following two technical results. The proof of Proposition~\ref{prop:span-uni-bound} is given in the appendix. The proof of Proposition~\ref{prop:lip_bt} largely follows the proof of Proposition 3 in \cite{DeCastro2017}, with minor changes to take into account the action sequence, so we omit details.



	\begin{proposition}[Uniform bound on the bias span]\label{prop:span-uni-bound}
		If the belief MDP satisfies Assumption~\ref{assum:reward_matrix3}, then
		for $(\rho, v)$ satisfying the Bellman equation \eqref{equ:Bellman-thm},
		we have the span of the bias function $\text{span}(v) \coloneqq\max_{b \in \mathcal{B}}v(b)-\min_{b\in\mathcal{B}}v(b)$ is bounded by $D(\epsilon)$, where
		\begin{align} \label{span-bound-constant}
		D(\epsilon)\coloneqq\frac{8\left( \frac{2}{(1- \alpha)^2}+(1+  \alpha) \log_{\alpha} \frac{1- \alpha}{8}\right)}{1- \alpha}, \quad \text{with $\alpha=\frac{1- 2 \epsilon}{1 - \epsilon} \in (0,1)$.}
		\end{align}
	\end{proposition}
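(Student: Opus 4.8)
The plan is to bound $\text{span}(v)$ through the classical vanishing-discount technique, which converts the question about the average-reward bias into a uniform span estimate for the discounted value functions of the belief MDP. For $\beta\in(0,1)$, let $V_\beta$ be the optimal discounted value function, solving $V_\beta(b)=\max_{i\in\I}[\bar c(b,i)+\beta\int_\B \bar T(db'|b,i)V_\beta(b')]$. Under Assumption~\ref{assum:reward_matrix3} the belief process mixes geometrically, so a standard vanishing-discount argument gives that the relative value functions $V_\beta(\cdot)-V_\beta(b_0)$ converge along a sequence $\beta\to 1$ to a solution $v$ of the Bellman equation \eqref{equ:Bellman-thm}, while $(1-\beta)V_\beta(b_0)\to\rho^*$. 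Hence $\text{span}(v)\le\liminf_{\beta\to1}\text{span}(V_\beta)$, and it suffices to bound $\text{span}(V_\beta)=\max_b V_\beta(b)-\min_b V_\beta(b)$ by $D(\epsilon)$ uniformly in $\beta$.

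The engine of the bound is filter stability. First I would establish that two belief trajectories started from different priors $b,b'$ but driven by a common action sequence and a common (coupled) observation sequence contract in $\ell_1$ at a geometric rate: $\|b_t-b_t'\|_1\le C_3\,\alpha^{\,t-1}\|b-b'\|_1$ with $\alpha=\frac{1-2\epsilon}{1-\epsilon}\in(0,1)$ and a prefactor $C_3=\tfrac{4}{1-\alpha}$. This is exactly the type of estimate underlying Proposition~\ref{prop:lip_bt}, and it follows from the positivity of every entry of $P$ (each $\ge\epsilon$): the prediction step mixes enough to dominate the possible expansion of the Bayes correction, giving $1-\alpha=\epsilon/(1-\epsilon)$; see \cite{DeCastro2017}. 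Since $c\in[0,1]$, the reward map $\bar c(\cdot,i)$ is $1$-Lipschitz in $\ell_1$, so the one-step reward gap between the two trajectories is bounded by $\|b_t-b_t'\|_1$.

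Next I would bound $|V_\beta(b)-V_\beta(b')|$ by coupling the two controlled belief processes and appealing to the Lipschitz-continuity machinery of \cite{Hinderer2005}. Writing the value difference as a discounted sum of per-period reward gaps and using $\|b_t-b_t'\|_1\le\min(2,\,C_3\alpha^{t-1}\|b-b'\|_1)$, I would split the horizon at a burn-in time $t^*$ at which the contraction has brought the prefactor below a constant, i.e. $C_3\alpha^{t^*}\le\tfrac12$, which gives $t^*\approx\log_\alpha\frac{1-\alpha}{8}$. On $t\le t^*$ the gap is controlled crudely by the simplex diameter $2$, contributing a term proportional to $(1+\alpha)\log_\alpha\frac{1-\alpha}{8}$; on $t>t^*$ the factor $\alpha^{t}$ produces a convergent tail which, because $C_3\propto 1/(1-\alpha)$, is doubly geometric and contributes the $\frac{2}{(1-\alpha)^2}$ term. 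Collecting the two pieces and taking the supremum over $b,b'\in\B$ yields $\text{span}(V_\beta)\le D(\epsilon)$ uniformly in $\beta$, and passing $\beta\to1$ gives the claim.

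The main obstacle is making the coupling rigorous: since the optimal policy is a function of the belief, the two trajectories a priori select different actions and cannot literally be fed the same observation noise. I would circumvent this by comparing through a single fixed policy (the optimizer for one of the two beliefs) and invoking \cite{Hinderer2005} at the level of finite-horizon value functions, where the Lipschitz modulus obeys a recursion driven by the kernel's contraction coefficient; the infinite-horizon and $\beta\to1$ limits are then taken at the end. The second delicate point is quantitative: verifying the precise contraction constant $\alpha=\frac{1-2\epsilon}{1-\epsilon}$ and prefactor $C_3$ for the belief recursion (the filter-stability estimate, including the observation-mismatch term between the laws of the reward under $b$ and $b'$), and tracking the numerical constants through the two-phase split so as to land exactly on $D(\epsilon)$ in \eqref{span-bound-constant}.
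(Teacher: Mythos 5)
Your skeleton matches the paper's proof in outline --- vanishing discount, reduction of the span to a Lipschitz module (the simplex has $\ell_1$-diameter $2$), the finite-horizon Lipschitz recursion of \cite{Hinderer2005}, and filter stability with the same $\alpha=\frac{1-2\epsilon}{1-\epsilon}$ --- but there is a genuine gap at the central step where you ``write the value difference as a discounted sum of per-period reward gaps'' using $\|b_t-b_t'\|_1\le\min(2,\,C_3\alpha^{t-1}\|b-b'\|_1)$. That contraction estimate holds only when the two filters are driven by a \emph{common} observation sequence, whereas the expectations defining $V_\beta(b)$ and $V_\beta(b')$ are taken under \emph{different} observation laws, since the reward distribution itself depends on the initial belief. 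You flag this ``observation-mismatch term'' as a quantitative detail to be tracked, but it is the crux, and it does \emph{not} decay in $t$: the discrepancy between the $n$-step reward laws $\bar{Q}^{(n)}(\cdot|b,i_{1:n})$ and $\bar{Q}^{(n)}(\cdot|b',i_{1:n})$ contributes a term of order $\|b-b'\|_1$ uniformly in $n$, so your burn-in/tail split over the time axis never absorbs it; summed with discount it scales like $\|b-b'\|_1/(1-\beta)$, and the bound disintegrates in the $\beta\to1$ limit that your vanishing-discount reduction requires.

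The paper's Lemma~\ref{lemma:lip_module_bound} is precisely the device that closes this hole, and it has no counterpart in your sketch. It bounds the Kantorovich--Lipschitz module of the $n$-step belief kernel by $C_4\alpha^n+C_5$: the $C_4\alpha^n$ piece is your filter-stability term (beliefs under common observations), while $C_5$ captures the non-decaying observation-law mismatch. Everything hinges on the strict inequality $C_5=\frac{1+\alpha}{2}<1$, proved via a likelihood-ratio argument --- $\Prob(r_{1:n}|M_1=m^*,i_{1:n})\ge\frac{\epsilon}{1-\epsilon}\,\Prob(r_{1:n}|M_1=\hat m,i_{1:n})$ under Assumption~\ref{assum:reward_matrix3} --- combined with a centering trick exploiting $\sum_m\left(b^1(m)-b^2(m)\right)=0$; a naive total-variation bound gives only $C_5\le1$, which is useless. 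With $C_5<1$ one chooses $n_0$ so that the $n_0$-step kernel has module $\gamma=\frac{3+\alpha}{4}<1$, and the Lipschitz modulus of the finite-horizon value functions then satisfies the block recursion $l_{\rho_{\mathcal{B}}}(J_t)\le\eta+\beta^{n_0}\gamma\, l_{\rho_{\mathcal{B}}}(J_{t+n_0})$, which iterates to a bound uniform in $\beta$ and yields $\text{span}(v)\le 2\eta/(1-\gamma)=D(\epsilon)$. So the two terms in \eqref{span-bound-constant} arise from the within-block sum $\eta=C_4/(1-\alpha)+C_5 n_0$ amplified by the geometric factor $1/(1-\gamma)$, not from your burn-in versus tail decomposition. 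To repair your proof you would need to establish a quantitative $C_5<1$ estimate for the observation-law mismatch and replace the per-period telescoping by this multi-step contraction-of-kernels iteration.
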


	Recall $v_k$ is the bias function for the optimistic belief MDP in episode $k$. Proposition~\ref{prop:span-uni-bound} guarantees that $span(v_k)$ is bounded by $D=D(\epsilon/2)$ uniformly in $k$,
	because Assumption~\ref{assum:reward_matrix3} can be satisfied (with $\epsilon$ replaced by $\epsilon/2$) by the optimistic MDPs when $T$ is sufficiently large due to Proposition~\ref{prop:spectral}.

	\begin{proposition}[Controlling the belief error]\label{prop:lip_bt}
		Suppose Assumption \ref{assum:reward_matrix3} holds.
		Given $(\hat {\bm{\mu}},\hat P)$, an estimator of the true model parameters $(\bm\mu, P)$.
		For an arbitrary reward-action sequence $\{r_{1:t},i_{1:t}\}_{t\geq 1}$, let $\hat b_t$ and $b_t$ be the corresponding beliefs in period $t$ under $(\hat {\bm{\mu}},\hat P)$ and $(\bm\mu, P)$ respectively.  Then there exists constants $L_1,L_2$ such that
		\begin{align}
		||\hat b_t-b_t||_1\leq L_1|| \hat {\bm{\mu}} -\bm{\mu}||_1+L_2||\hat P- P||_F,
		\end{align}
		where $L_1=4M(\frac{1-\epsilon}{\epsilon})^2/\min\left\{\bm\mu_{\min},1-\bm\mu_{\max}\right\}$, $L_2=4M(1-\epsilon)^2/\epsilon^3+\sqrt{M}$, $||\cdot||_F$ is the Frobenius norm,
		$\bm\mu_{\max}$ and $\bm\mu_{\min}$ are the maximum and minimum element of the matrix $\bm\mu$ respectively.
	\end{proposition}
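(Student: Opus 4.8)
The plan is to treat the recursion $b_{t+1}=H_{\bm\mu,P}(b_t,i_t,r_t)$ as a normalized linear filter and to exploit its geometric forgetting under the strong-mixing Assumption~\ref{assum:reward_matrix3}. Writing $g_m^{i,r}\coloneqq(\bm\mu_{m,i})^{r}(1-\bm\mu_{m,i})^{1-r}$ for the one-step reward likelihood and $D_{i,r}\coloneqq\mathrm{diag}(g_1^{i,r},\dots,g_M^{i,r})$, the update becomes $H_{\bm\mu,P}(b,i,r)=P^{\top}D_{i,r}\,b/(\1^{\top}D_{i,r}\,b)$. Let $F_s$ and $\hat F_s$ be the one-step maps built from $(i_s,r_s)$ under the true parameters $(\bm\mu,P)$ and the estimator $(\hat{\bm\mu},\hat P)$, so that $b_{s+1}=F_s(b_s)$, $\hat b_{s+1}=\hat F_s(\hat b_s)$, and $\hat b_1=b_1$. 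I would then write $\hat b_t-b_t$ as a telescoping sum over the step $s$ at which the parameters are switched from true to estimated: the $s$-th increment is a \emph{one-step perturbation} $\hat F_s(b)-F_s(b)$ introduced at time $s$ and then \emph{propagated} forward to time $t$ through the estimated filter.

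The crux is a forgetting estimate: for any fixed plausible parameters, any common action--reward sequence, and any two initial beliefs $b,b'$ at time $s$, the resulting beliefs at time $t$ satisfy $\|b_t-b_t'\|_1\le C_3\,\lambda^{\,t-s}\|b-b'\|_1$ with a rate $\lambda<1$ and a constant $C_3$ depending only on $\epsilon$. The clean way to obtain this is through the Hilbert projective metric $d_H$: the likelihood reweighting $b\mapsto D_{i,r}b/(\1^{\top}D_{i,r}b)$ is a $d_H$-\emph{isometry} (a positive diagonal scaling does not move a projective point), while left multiplication by the entrywise-positive matrix $P^{\top}$ (resp.\ $\hat P^{\top}$) is a \emph{Birkhoff contraction} whose coefficient is controlled by the entrywise lower bound of $P$ (resp.\ $\hat P$). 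By Proposition~\ref{prop:spectral} this lower bound is at least $\epsilon/2$ for $T$ large --- the same device used after Proposition~\ref{prop:span-uni-bound} --- so the contraction rate is uniform in $s$, in the action--reward sequence, and over the confidence region; it is moreover \emph{independent of the arm}, since the arm enters only through the isometric reweighting. Transferring back to $\|\cdot\|_1$ uses that after one step every belief has all coordinates at least $\epsilon/2$, on which set $d_H$ and $\|\cdot\|_1$ are equivalent with $\epsilon$-dependent constants. This is precisely the forgetting bound of Proposition~3 in \cite{DeCastro2017}, adapted to carry the action index.

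Finally I would bound the one-step perturbation. Since $\hat F_s$ and $F_s$ differ only through $D_{i,r}$ (via $\hat{\bm\mu}$ versus $\bm\mu$) and through $P$ versus $\hat P$, a sensitivity estimate for the normalized ratio --- using that the denominator $\1^{\top}D_{i,r}b\ge\min\{\bm\mu_{\min},1-\bm\mu_{\max}\}$ and that the positive matrix entries are bounded below by $\epsilon$ --- gives
\[
\|\hat F_s(b)-F_s(b)\|_1\le c_1\|\hat{\bm\mu}-\bm\mu\|_1+c_2\|\hat P-P\|_F,
\]
where $c_1$ carries the factor $1/\min\{\bm\mu_{\min},1-\bm\mu_{\max}\}$ and $c_2$ the powers of $(1-\epsilon)/\epsilon$ that appear in $L_1,L_2$. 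Plugging the perturbation bound into the telescoping sum and summing the geometric series $\sum_{k\ge0}C_3\lambda^{k}=C_3/(1-\lambda)$ yields a bound that is \emph{uniform in $t$}, with $L_1=c_1 C_3/(1-\lambda)$ and $L_2=c_2 C_3/(1-\lambda)$ (the additive $\sqrt M$ in $L_2$ reflecting the direct, undamped dependence of the prediction step on $P$ together with a row-wise $\ell_1$-to-Frobenius conversion). I expect the main obstacle to be the forgetting estimate itself: one must verify that the filter's contraction rate is bounded away from $1$ \emph{simultaneously} over all admissible parameters and all arms, since it is this contraction --- rather than the elementary one-step perturbation bound --- that prevents the estimation error from accumulating across the $t$ belief updates and would otherwise let the belief error explode.
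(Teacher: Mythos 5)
Your proposal follows essentially the same route as the paper's own (omitted) proof, which, as the paper states, adapts Proposition 3 of \cite{DeCastro2017}: decompose $\hat b_t-b_t$ by telescoping over the period at which the parameters are switched, bound each one-step perturbation by a sensitivity estimate whose denominator is $\min\{\bm\mu_{\min},1-\bm\mu_{\max}\}$ and whose $P$-dependence carries powers of $(1-\epsilon)/\epsilon$, damp each term by the exponential forgetting of the filter, and sum the geometric series to get a bound uniform in $t$. Your Birkhoff/Hilbert-metric proof of the forgetting lemma is a correct and standard substitute for the Dobrushin-coefficient stability results that \cite{DeCastro2017} imports from the HMM filtering literature; in particular your observation that the diagonal likelihood reweighting is a $d_H$-isometry (so the contraction rate is arm-independent and comes only from the positivity of $P$ under Assumption~\ref{assum:reward_matrix3}) is exactly the right reason the rate is uniform over action--reward sequences, and the transfer between $d_H$ and $\|\cdot\|_1$ on the $\epsilon$-interior of the simplex works as you describe.

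One wrinkle deserves fixing. You orient the telescoping so that each perturbation is propagated forward through the \emph{estimated} filter, which makes the contraction rate depend on $\hat\epsilon\coloneqq\min_{m,m'}\hat P_{m,m'}$ --- a quantity Assumption~\ref{assum:reward_matrix3} does not control for an arbitrary estimator, which is precisely why you had to invoke Proposition~\ref{prop:spectral} and the $\epsilon/2$ device, making your version of the statement conditional on large $T$ and on the confidence region. But Proposition~\ref{prop:lip_bt} is a deterministic statement whose constants $L_1,L_2$ depend only on the true model, and the paper achieves this by telescoping in the opposite direction: the increment at step $l$ has the form $F_t^*\cdots F_{l+1}^*\bigl(F_l^*\hat\phi_{l-1}-\hat F_l\hat\phi_{l-1}\bigr)$, so the \emph{true} kernels do all the propagation and only the true $\epsilon>0$ enters the rate $1-\epsilon/(1-\epsilon)$. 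Reversing your decomposition removes the extra hypothesis at no cost. A second, cosmetic difference: the paper runs the recursion on the filtering distribution $\phi_t$ and peels off the final prediction step $b_{t+1}=P^\top\phi_t$ separately, which is where the factor $M$ and the undamped additive $\sqrt M$ in $L_2$ (via $\|P-\hat P\|_1\le\sqrt M\,\|P-\hat P\|_F$) come from; your direct recursion on $b_t$ folds that term into the geometric sum and would yield constants of the same form but not literally the stated $L_1,L_2$.
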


	\noindent \textbf{Proof Sketch for Theorem \ref{thm:upper_bound}.}
	We provide a proof sketch for Theorem \ref{thm:upper_bound}. The complete proof is given in Section~\ref{sec:proof-complete} in the appendix.
	From the definition of regret in \eqref{def:reg}, we have
	\begin{align}
	\mathcal{R}_T 
	=\sum_{t=1}^{T}(\rho^*-\E^\pi[R_t|\mathcal{F}_{t-1}])+\sum_{t=1}^{T}(\E^\pi[R_t|\mathcal{F}_{t-1}]-R_t),\label{reg_ps}
	\end{align}
	where $\pi$ is our learning algorithm, and $\mathcal{F}_{t-1}$ is the filtration generated by the arms pulled and rewards received under policy $\pi$ up to time $t-1$, and $R_t$ (superscript $\pi$ in \eqref{def:reg} omitted here for notation simplicity) is the reward received under the policy $\pi$.
	For the second term of equation \eqref{reg_ps}, we can use its martingale property and apply the Azuma-Hoeffding inequality to obtain
	\begin{align}\label{reg:expreward-reward_ps}
	\Prob\left(\sum_{t=1}^{T}(\E^\pi[R_t|\mathcal{F}_{t-1}]-R_t) \geq \sqrt{2T\ln{\frac{1}{\delta}}}\right) \leq \delta.
	\end{align}
	For the first term of \eqref{reg_ps}, it can be rewritten as $\sum_{t=1}^{T}(\rho^*-\bar{c}(b_t,I_t)),\label{reg1_ps}$
	where $\bar{c}(b,i)$
	denotes the expected reward of arm $i$ given belief state $b$ under the true model. Denote by $H_k$ and $E_k$ the exploration and exploitation phases at episode $k$ respectively. We
	bound the first term of \eqref{reg_ps} and proceed with three steps.

	{\it Step 1: Bounding the regret in exploration phases.}
	Since the reward is non-negative we have
	\begin{align}\label{reg:explore-bound_ps}
	\sum_{k=1}^{K}\sum_{t\in H_k}(\rho^*-\bar{c}(b_t,I_t)) \leq\sum_{k=1}^{K}\sum_{t\in H_k}\rho^*=K\tau_1\rho^*.
	\end{align}

	{\it Step 2: Bounding the regret in exploitation phases.}
	We define a success event if the true POMDP environment $(\bm \mu, P)$ lies in confidence regions $\mathcal{C}_k(\delta_k)$ for all $k=1, \ldots, K$, and a failure event to be its complement.
	By the choice $\delta_k=\delta/k^3$, one can show that the probability of failure events is at most $\frac{3}{2}\delta$. Next
	note that in the beginning of exploitation phase $k$, SEEU algorithm chooses an optimistic POMDP among the plausible POMDPs, and we denote its corresponding reward matrix, value function, and the optimal average reward by $\bm\mu_k,v_k$ and $\rho^k$, respectively. Thus, on the
	success event we have $\rho^*\leq \rho^k$ for all episode $k$. So we obtain
	\begin{align} \label{eq:regret-exploit}
	&\sum_{k=1}^{K}\sum_{t\in E_k}(\rho^*-\bar{c}(b_t,I_t))
	\leq  \sum_{k=1}^{K}\sum_{t\in E_k}(\rho^k-\bar{c}_k(b_t^k,I_t))+(\bar{c}_k(b_t^k,I_t)-\bar{c}(b_t,I_t)),
	\end{align}
	where $\bar{c}_k$ and $b_t^k$ denotes the expected reward and the belief under the optimistic model in episode $k$.
	Using the Bellman equation~\eqref{equ:Bellman-thm} for $\rho^k$ and Proposition~\eqref{prop:lip_bt} to carefully control the error in the belief transition kernel which is not directly estimated (different from \cite{Jaksch2010}), we can show with high probability, the first term of \eqref{eq:regret-exploit} is bounded by
	$$KD+D \sqrt{2T \ln(\frac{1}{\delta})} +\sum_{k=1}^{K}\sum_{t\in E_k} D\left[ || (\bm{\mu})_{I_t} -(\bm{\mu}_k)_{I_t}||_1+\frac{L_1}{2} || \bm{\mu} -\bm{\mu}_k||_1+\frac{L_2}{2}||P-  P_k||_F\right],$$
	where $(\bm{\mu}_k)_{i}$ is the $i-$th column vector of the matrix $\bm{\mu}_k$, and $D$ is the uniform bound on $span(v_k)$ from Proposition~\ref{prop:span-uni-bound}.
	One can directly bound the second term of \eqref{eq:regret-exploit} by
	$\sum_{k=1}^{K}\sum_{t\in E_k} ||(\bm{\mu}_k)_{I_t}-(\bm{\mu})_{I_t}||_1+||b_t^k-b_t||_1.$
	Applying Proposition \ref{prop:lip_bt} to bound $||b_t^k-b_t||_1$ then using Proposition \ref{prop:spectral},
	one can infer that the regret in exploitation phases is bounded by $O(K \sqrt{\log K})$.


	{\it Step 3: Bounding the number of episodes $K$.}
	A simple calculation of $\sum_{k=1}^{K-1} (\tau_1 + \tau_2 \sqrt{k}) \leq T \leq \sum_{k=1}^K (\tau_1 + \tau_2 \sqrt{k})$ suggests that the number of episodes $K$ is of order $O(T^{2/3})$. Thus, we obtain a regret bound of order $O(T^{2/3}\sqrt{\log T})$.

	\section{Conclusion and Open Questions}
	In this paper, we study a non-stationary MAB model with Markovian regime-switching rewards. We propose a learning algorithm that integrates spectral estimators for hidden Markov models and upper confidence methods from reinforcement learning. We also establish a regret bound of order of $O(T^{2/3}\sqrt{\log T})$ for the learning algorithm. As far as we know, this is the first algorithm with sublinear regret for MAB with unobservable regime switching.

	There are a few important open questions. First,
	it would be interesting to find out whether one can improve the regret bound $O(T^{2/3}\sqrt{\log T})$.
	A related open question is whether the spectral method can be applied to samples generated from adaptive policies, so that the exploration and exploitation can be integrated to improve the theoretical bound.
	Finally, it is not clear whether there is an efficient computational method to find the optimistic plausible POMDP model in the confidence region. We leave them for future research.



\clearpage
\bibliography{ref}
\bibliographystyle{plainnat}

\clearpage
\section*{Online Appendix}
	\subsection*{Table of Notations}
	\begin{table}[h!]
		\centering
		\begin{tabular}{c|l}
			\hline
			Notation &Description\\\hline
			$T$ & The length of decision horizon\\
			$P$ & The transition matrix of underlying states\\
			$\bm{\mu}$ & The mean reward matrix\\
			$\mathcal{M}$ & The set of underlying states\\
			$\mathcal{I}$ & The set of arms\\
			$\mathcal{R}$ & The set of rewards\\
			$M_t$ & The underlying state at time $t$\\
			$I_t$ & The chosen arm at time $t$\\
			$R_t$ & The random reward at time $t$\\
			$\mathcal{F}_t$ & The history up to time $t$\\
			$\rho^*$ & The optimal long term average reward\\
			$\mathcal{R}_T$ & Regret during the total horizon\\
			$\bm{b_t}$ & The belief state at time $t$\\
			$\mathcal{B}$ & The set of belief states\\
			$c(m, i)$ & The reward function given the state and arm\\
			$\bar{c}(b,i)$ & The reward function w.r.t. belief state\\
			$Q$ & The reward distribution\\
			$H$ & The belief state forward kernel\\
			$\bar{T}$ & The transition kernel w.r.t. belief state\\
			$D$ & The uniform upper bound of $\text{span}(v)$\\
			$\E^{\pi}$ & Taken expectation respect to the true parameters $\bm{\mu}$ and $P$ under policy $\pi$\\
			$\E_k^{\pi}$ & Taken expectation respect to estimated parameters $\bm{\mu}_k$ and $P_k$ under policy $\pi$\\
			\hline
		\end{tabular}
		\caption{Summary of notations}\label{tab:notations}
	\end{table}

	\section{Constants in Proposition \ref{prop:spectral}} \label{sec:proof-prop-spectral}
	We consider the action-reward pair $(R_t,I_t)$ as our observation of the underlying state $M_t$.
	We encode the pair $(r,i)$ into a variable $s\in\{1,2,...,2 I\}$ through a suitable one-to-one mapping. We rewrite our observable random vector $(R_t,I_t)$ as a random variable $S_t$. Hence we can define the following matrix $A_1,A_2,A_3\in\R^{2I\times M}$, where
	\begin{eqnarray*}
		\begin{array}{ll}
			A_1(s,m)&=\Prob(S_{t-1}=s|M_t=m),\\
			A_2(s,m)&=\Prob(S_{t}=s|M_t=m),\\
			A_3(s,m)&=\Prob(S_{t+1}=s|M_t=m),
		\end{array}
	\end{eqnarray*}
	for $s\in \{1,2,...,2 I\},k\in \mathcal{M}=\{1,2,...,M\}$.
	It follows from Lemma 5, Lemma 8 and Theorem 3 in \cite{Azizzadenesheli2016} that the spectral estimators $\hat{\bm\mu},\hat{P}$ have the following guarantee: pick any $\delta \in (0,1)$, when the number of samples $n$ satisfies
	\begin{equation*}
	n \geq N_0 := \left(\frac{G\frac{2\sqrt{2}+1}{1-\theta}}{\omega_{\min}\sigma^2}\right)^2 \log \left(\frac{2(S^2+S)}{\delta}\right)\max\left\{\frac{16 \times M^{1/3}}{C_0^{2/3}\omega_{\min}^{1/3}},\frac{2\sqrt{2}M}{C_0^2\omega_{\min}\sigma^2},1\right\},
	\end{equation*}
	then with probability $1-\delta$ we have
	\begin{align*}
	||(\bm \mu)^m-(\hat{\bm \mu})^{m}||_2 &\leq C_1\sqrt{\frac{\log(6\frac{S^2+S}{\delta})}{n}}, \\
	||P- \hat{P}||_2&\leq C_2\sqrt{\frac{\log(6\frac{S^2+S}{\delta})}{n}},
	\end{align*}
	for $\: \:m \in \mathcal{M}$ up  to permutation, with
	\begin{align}\label{eq:defC}
	C_1&=\frac{21}{\sigma_{1,-1}}I C_{3},\\
	C_2&=\frac{4}{\sigma_{\min}(A_2)}\left(\sqrt{M} +M\frac{21}{\sigma_{1,-1}}\right)C_{3},\\
	C_{3}&=2G\frac{2\sqrt{2}+1}{(1-\theta)\omega_{\min}^{0.5}}\left(1+\frac{8\sqrt{2}}{\omega_{\min}^{2}\sigma^3}+\frac{256}{\omega_{\min}^{2}\sigma^2}\right),
	\end{align}
	where $S=2I$, $C_0$ is a numerical constant (see Theorem 16 in \cite{Azizzadenesheli2016}), $\sigma_{1,-1}$ is the smallest nonzero singular value of the covariance matrix $\E[y_{t+1}\otimes y_{t-1}]$, and $\sigma=\min\{\sigma_{\min}(A_1),\sigma_{\min}(A_2),\sigma_{\min}(A_3)\}$,  where $\sigma_{\min}(A_i)$ represents the smallest nonzero singular value of the matrix $A_i$, for $i=1,2,3$. In addition, $\omega=(\omega(m))$ represents the stationary distribution of the underlying Markov chain $\{M_t\}$, and $\omega_{\min}=\min_m \omega(m) \ge \epsilon = \min_{i,j}P_{ij}$. Finally, $G$ and $\theta$ are the mixing rate parameters such that
	$$\sup_{m_1}||f_{1\rightarrow t}(\cdot|m_1)-\omega||_{TV}\leq G\theta^{t-1},$$
	where $f_{1\rightarrow t}(\cdot|m_1)$ denotes the probability distribution vector of the underlying state at time $t$, starting from the initial state $m_1$. Under Assumption~\ref{assum:reward_matrix3}, one can take $G=2$ and have the (crude) bound $\theta \le 1 -\epsilon$, see e.g. Theorems 2.7.2 and 2.7.4 in \cite{Krish2016}.

	\section{Proof of Proposition \ref{prop:span-uni-bound}}

	\begin{proof} 
		We first introduce a few notations. Let $V_{\beta}(b)$ be the (optimal) value function of the infinite-horizon discounted version of the POMDP (or belief MDP) with discount factor $\beta \in (0,1)$ and initial belief state $b$.
		Define $v_{\beta}(b)\coloneqq V_{\beta}(b)-V_{\beta}(s)$ for a fixed belief state $s$, where $v_{\beta}$ is the bias function for the \textit{discounted} problem.
		We also introduce $\ell_1$ distance to the belief space $\mathcal{B}$: $\rho_b(b,b'):=\|b-b'\|_1$. For any function $f:\mathcal{B} \mapsto \R$, define the Lipschitz module of a function $f$ by
		\begin{align}\label{def:lip-module}
		l_{\rho_{\mathcal{B}}}(f)\coloneqq\sup\limits_{b\neq b'} \frac{|f(b)-f(b')|}{\rho_b(b,b')}.
		\end{align}

		The main idea of the proof is as follows. To bound $\text{span}(v) :=\max_{b \in \mathcal{B}}v(b)-\min_{b\in\mathcal{B}}v(b)$ where $v$ is the bias function for our \textit{undiscounted} problem, it suffices to bound the Lipschitz module of $v$ due to the fact that $\sup_{b\neq b'}||b-b'||_1= 2$.
		Under our assumptions, it can be shown that the bias function $v$ for the undiscounted problem satisfy the relation
		\begin{equation}\label{eq:limit-disc}
		v(b)=\lim_{\beta \to 1-} v_{\beta}(b), \quad \text{for $b\in\mathcal{B}$.}
		\end{equation}
		Then applying Lemma 3.2(a) \citep{Hinderer2005} yields
		\begin{align} \label{eq:lip-v}
		l_{\rho_{\mathcal{B}}}(v)\leq \lim_{\beta \to 1-}l_{\rho_{\mathcal{B}}}(v_{\beta})=\lim_{\beta \to 1-}l_{\rho_{\mathcal{B}}}(V_{\beta}).  
		\end{align}
		So it suffices to bound $\lim_{\beta \to 1-}l_{\rho_{\mathcal{B}}}(V_{\beta})$. The bound for $l_{\rho_{\mathcal{B}}}(V_{\beta})$ in turn implies that $(v_{\beta})_{\beta}$ is a uniformly bounded equicontinuous family of functions, and hence
		validates \eqref{eq:limit-disc} by Theorem 2 in \cite{ross1968arbitrary}.

		We next proceed to bound $l_{\rho_{\mathcal{B}}}(V_{\beta})$ and we will show that for any $\beta \in (0,1)$ we have  $l_{\rho_{\mathcal{B}}}(V_{\beta}) \le \frac{\eta}{1- \gamma}$ for some constants $\eta>0, \gamma \in (0,1)$ that are both independent of $\beta$. To this end, we consider the finite horizon discounted belief MDP, and
		let $V_{n,\beta}$ be the optimal value function for the discounted problem with horizon length $n$ and discount factor $\beta$.
		Since the reward is bounded, it is readily seen that $\lim_{n \rightarrow\infty}V_{n,\beta}= V_{\beta}$. Then Lemma 2.1(e) \cite{Hinderer2005} suggests that
		\begin{align}\label{lip-module-average}
		l_{\rho_b}(V_{\beta})\leq \liminf\limits_{n\to\infty}l_{\rho_b}(V_{n,\beta}).
		\end{align}
		Thus, to bound $l_{\rho_b}(V_{\beta})$, it suffices to bound the Lipschitz module $l_{\rho_b}(V_{n,\beta})$. The strategy is to apply the results including Lemmas 3.2 and 3.4 in \cite{Hinderer2005}, but it requires a new analysis to verify the conditions there.

		To proceed,
		standard dynamic programming theory states that $V_{n,\beta}(b) = J_{1}(b)$, and $J_{1}(b)$ can be computed by the backward recursion:
		\begin{align}
		J_n(b_n) &= \bar{c}(b_n, i_n), \label{equ:Bellman-thm-finite-discount1} \\
		J_{t}(b_t) &= \max_{i_t \in \mathcal{I}} \left\{\bar{c}(b_t, i_t) + \beta \int_{\mathcal{B}} J_{t+1}(b_{t+1}) \bar{T}(db_{t+1}| b_t, i_t)\right\}, \quad 1 \leq t<n, \label{equ:Bellman-thm-finite-discount2}
		\end{align}
		where $\bar{T}$ is the (action-dependent) one-step transition law of the belief state, and $J_{t}(b_t)$ are finite for each $t$.
		More generally, for a given sequence of actions $i_{1:n},$ the $n$-step transition kernel for the belief state is define by
		\begin{align}\label{def:n-trans-ker}
		\bar{T}^{(n)}(\textbf{A}|b,i_{1:n}) \coloneqq \Prob(b_{n}\in \textbf{A} |b_1=b,i_{1:n}), \quad \textbf{A} \subset \mathcal{B}.
		\end{align}
		To use the results in \cite{Hinderer2005}, we need to study the Lipschitz property of this multi-step transition kernel as we will see later. Following \cite{Hinderer2005},
		we introduce the Lipschitz module for a transition kernel $\phi(b, db')$ on belief states.
		Let $K_{\rho_{\mathcal{B}}}(\nu,\theta)$ be the Kantorovich metric of two probability measure $\nu,\theta$ defined on $\mathcal{B}$:
		\begin{align}\label{def:K-metric}
		K_{\rho_{\mathcal{B}}}(\nu,\theta)\coloneqq\sup_f\left\{\left|\int_{\mathcal{B}} f(b)\nu(db)-\int_{\mathcal{B}} f(b)\theta(db)\right|,f\in\text{Lip}_1(\rho_{\mathcal{B}})\right\},
		\end{align}
		where $\text{Lip}_1(\rho_{\mathcal{B}})$ is the set of functions on $\mathcal{B}$ with Lipschitz module $l_{\rho_{\mathcal{B}}}(f)\leq1$.
		Then the Lipschitz module of the transition kernel $l_{\rho_{\mathcal{B}}}(\phi)$ is defined as:
		\begin{align}\label{def:lip-module-transi-kernel}
		l_{\rho_{\mathcal{B}}}(\phi)\coloneqq\sup_{b^1\neq b^2}\frac{K_{\rho_{\mathcal{B}}}(\phi(b^1, db'),\phi(b^2,db'))}{\rho_{\mathcal{B}}(b^1,b^2)}.
		\end{align}
		The transition kernel $\phi$ is called Lipschitz continuous if $l_{\rho_{\mathcal{B}}}(\phi) < \infty$. To bound $l_{\rho_{\mathcal{B}}}(V_{n,\beta})$ and to apply the results in \cite{Hinderer2005}, the key technical result we need is the following lemma. We defer its proof to the end of this section. Recall that $\epsilon= \min\limits_{i,j \in \mathcal{M}}P_{i,j}>0$.
		\begin{lemma}\label{lemma:lip_module_bound}
			For $1 \leq n < \infty$, the $n$-step belief state transition kernel $\bar{T}^{(n)}(\cdot|b,i_{1:n})$ in \eqref{def:n-trans-ker} is uniformly Lipschitz in $i_{1:n}$, and the Lipschitz module is bounded as follows:
			\begin{align}
			l_{\rho_{\mathcal{B}}}(\bar{T}^{(n)}) \le  C_4\alpha^{n}+C_5,
			\end{align}
			where $C_4=\frac{2}{1 - \alpha}$ and $C_5= \frac{1}{2} + \frac{\alpha}{2}$ with $\alpha=1-\frac{\epsilon}{1-\epsilon} \in (0, 1)$. As a consequence,
			there exist constants $n_0 \in \mathbb{Z}^+$ and $\gamma<1$ such that
			$l_{\rho_{\mathcal{B}}}(\bar{T}^{(n_0)})<\gamma$ for any $i_{1:n}$. Here, we can take $n_0=\lceil \log_{\alpha}\frac{1-C_5}{2C_4} \rceil$, and $\gamma=\frac{1}{2}(1+C_5) = \frac{3 + \alpha}{4}$.
		\end{lemma}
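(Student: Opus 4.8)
The plan is to reduce the Lipschitz module of the kernel, defined through the Kantorovich metric in \eqref{def:lip-module-transi-kernel}, to an expected $\ell_1$ distance between two coupled belief trajectories, and then to combine two facts: the exponential forgetting of the Bayesian filter under the strong mixing Assumption~\ref{assum:reward_matrix3}, and the Lipschitz dependence of the reward law on the belief. Fix two initial beliefs $b^1\neq b^2$ and an action sequence $i_{1:n}$. Since the Kantorovich distance $K_{\rho_{\mathcal B}}$ is the optimal transport cost for the ground metric $\rho_{\mathcal B}(b,b')=\|b-b'\|_1$, for any coupling of the belief chains $(b_t^1)$ and $(b_t^2)$ started at $b^1,b^2$ and driven by $i_{1:n}$,
\[
K_{\rho_{\mathcal B}}\big(\bar T^{(n)}(\cdot|b^1,i_{1:n}),\bar T^{(n)}(\cdot|b^2,i_{1:n})\big) \le \E\big[\|b_n^1-b_n^2\|_1\big].
\]
It therefore suffices to exhibit one coupling whose terminal expected distance is at most $(C_4\alpha^n+C_5)\|b^1-b^2\|_1$.

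First I would record the two ingredients. \emph{(i) Filter contraction.} For the \emph{same} realized reward--action sequence, iterating $H_{\bm\mu,P}(\cdot,i_t,r_t)$ from $b^1$ and $b^2$ contracts: under Assumption~\ref{assum:reward_matrix3} the minorization $P_{ij}\ge\epsilon$ gives, by the filter-stability estimate of \cite{DeCastro2017} underlying Proposition~\ref{prop:lip_bt}, $\|b_t^1-b_t^2\|_1\le \alpha^{\,t-1}\|b^1-b^2\|_1$ with $\alpha=1-\tfrac{\epsilon}{1-\epsilon}$, uniformly in $i_{1:t}$ and $r_{1:t-1}$. \emph{(ii) Reward-law Lipschitzness.} The reward law is $\bar Q(\cdot|b,i)=\sum_m b(m)Q(\cdot|m,i)$, so $\|\bar Q(\cdot|b^1,i)-\bar Q(\cdot|b^2,i)\|_{\mathrm{TV}}\le \tfrac12\sum_m|b^1(m)-b^2(m)|\sum_r Q(r|m,i)=\tfrac12\|b^1-b^2\|_1$, uniformly in $i$; this is what makes the final bound uniform in the action sequence.

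Next I would build the coupling stepwise by a maximal coupling of the rewards: at each period $t$, given the current beliefs, draw $(r_t^1,r_t^2)$ so that $\Prob(r_t^1\neq r_t^2\mid\mathcal F_t)=\|\bar Q(\cdot|b_t^1,i_t)-\bar Q(\cdot|b_t^2,i_t)\|_{\mathrm{TV}}\le\tfrac12\|b_t^1-b_t^2\|_1$, and $r_t^1=r_t^2$ otherwise. On a stretch where the rewards agree the beliefs contract by (i); on a disagreement the two beliefs separate by at most the $\ell_1$-diameter of $\mathcal B$, namely $2$, after which they re-contract at rate $\alpha$. Charging each disagreement its value and subsequent geometric re-contraction yields a recursion $\E\|b_n^1-b_n^2\|_1\le \alpha^{\,n-1}\|b^1-b^2\|_1+2\sum_{t<n}\alpha^{\,n-1-t}\,\Prob(r_t^1\neq r_t^2)$; inserting the mismatch bound from (ii) together with $\Prob(r_t^1\neq r_t^2)\le\tfrac12\E\|b_t^1-b_t^2\|_1$ lets one sum the resulting geometric series in $\alpha$. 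Collecting the $\alpha^n$-term and the convergent tail produces $l_{\rho_{\mathcal B}}(\bar T^{(n)})\le C_4\alpha^n+C_5$ with $C_4=\tfrac{2}{1-\alpha}$ and $C_5=\tfrac{1+\alpha}{2}$.

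The main obstacle is the bookkeeping that forces the residual constant $C_5$ strictly below $1$. A naive coupling that simply charges distance $2$ to every reward mismatch gives a floor of order $1$ (or an apparent geometric blow-up), which is useless, since the entire point of the lemma is the eventual strict contraction. The delicate step is to account for the re-contraction after each mismatch so that the disagreement contributions form a convergent geometric series rather than accumulating, which is precisely where the sharp rate $\alpha<1$ from Assumption~\ref{assum:reward_matrix3} (ingredient (i)) is used in tandem with (ii). Once $l_{\rho_{\mathcal B}}(\bar T^{(n)})\le C_4\alpha^n+C_5$ holds, the final statement is immediate: since $C_5=\tfrac{1+\alpha}{2}<1$, taking $n_0=\lceil\log_\alpha\tfrac{1-C_5}{2C_4}\rceil$ makes $C_4\alpha^{n_0}\le\tfrac{1-C_5}{2}$, whence $l_{\rho_{\mathcal B}}(\bar T^{(n_0)})\le C_5+\tfrac{1-C_5}{2}=\tfrac{1+C_5}{2}=\tfrac{3+\alpha}{4}=:\gamma<1$, uniformly in the action sequence.
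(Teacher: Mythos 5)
There is a genuine gap, and it sits exactly at the step you flag as delicate. First, your ingredient \emph{(i)} is stated too strongly: under Assumption~\ref{assum:reward_matrix3} the Bayesian filter is \emph{not} a per-step $\ell_1$-contraction. The likelihood-reweighting (correction) step of the Bayes update can expand $\ell_1$ distance, and the correct filter-stability estimate (Theorem 3.7.1 of \cite{Krish2016}, which is what the paper invokes) reads $\|b_t^1-b_t^2\|_1\le C_4\,\alpha^{t-1}\|b^1-b^2\|_1$ with the prefactor $C_4=\frac{2(1-\epsilon)}{\epsilon}=\frac{2}{1-\alpha}>1$ — this prefactor is precisely the paper's $C_4$ and cannot be dropped. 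Second, even granting a prefactor-free contraction, your recursion does not close. After the first reward mismatch the two beliefs are at distance $O(1)$, so subsequent mismatch probabilities are $O(1)$ rather than $O(\alpha^t\|b^1-b^2\|_1)$; substituting $\Prob(r_t^1\neq r_t^2)\le\tfrac12 d_t$ with $d_t=\E\|b_t^1-b_t^2\|_1$ into your displayed inequality gives the self-referential bound $d_n\le\alpha^{n-1}d_1+\sum_{t<n}\alpha^{n-1-t}d_t$, whose worst-case solution grows like $(1+\alpha)^{n}$ — and any bookkeeping that instead charges only the last mismatch leaves a residual constant of order $\frac{C_4}{1-\alpha}>1$. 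In no accounting does the sum converge to your claimed $C_5=\frac{1+\alpha}{2}<1$; the constants you quote match the lemma but are not produced by the argument.

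The paper obtains $C_5<1$ by an entirely different mechanism, with no coupling recursion: it writes the difference of the two $n$-step belief-transition integrals in dual form, splits off the same-path filter-forgetting term (yielding $C_4\alpha^n$), and bounds the remaining reward-path-law difference via a \emph{path-level minorization through the first transition}: $\Prob(r_{1:n}\mid M_1=m)\ge\frac{\epsilon}{1-\epsilon}\Prob(r_{1:n}\mid M_1=m')$ for all $m,m'$, so the total variation between whole reward-path laws started from any two hidden states is at most $\alpha$, which after centering $g$ at its midrange gives exactly $\frac{1}{2}(1+\alpha)=C_5$. Your optimal-transport framing can be repaired along these lines: instead of a stepwise maximal coupling, take a \emph{single maximal coupling of the entire reward sequences} $r^1_{1:n},r^2_{1:n}$; the same minorization bounds the disagreement probability by $\tfrac{\alpha}{2}\|b^1-b^2\|_1$ (total variation between mixtures of path laws), on the agreement event filter forgetting with the prefactor gives terminal distance $\le C_4\alpha^{n-1}\|b^1-b^2\|_1$ uniformly over the common path, and on disagreement you pay the diameter $2$. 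This closes the bound in one step — but note that the essential input is the path-law minorization, i.e., the paper's key estimate, not the per-step re-contraction accounting you propose.
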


		With Lemma~\ref{lemma:lip_module_bound}, we are now ready to bound $l_{\rho_{\mathcal{B}}}(V_{n,\beta})$. Consider $n=k n_0$ for some positive integer $k$. We can infer from the value iteration in \eqref{equ:Bellman-thm-finite-discount2} that
		\begin{align}
		J_{t}(b_{t})=\sup_{i_{t:t+n_0-1}}&\Big\{\sum_{l=0}^{n_0-1}\beta^l\int_{\mathcal{B}}\bar{c}(b_{t+l},i_{t+l-1})\bar{T}^{(l)}(db_{t+l}|b_{t},i_{t:t+l-1})\\
		&+\beta^{n_0}\int_{\mathcal{B}}J_{t+n_0}(b_{t+n_0})\bar{T}^{(n_0)}(db_{t+n_0}|b_{t},i_{t:t+n_0-1})\Big\}, \quad 1 \leq t \leq n-n_0. \label{equ:n-Bellman-optimality}
		\end{align}
		Bounding $\bar c$ in \eqref{equ:n-Bellman-optimality} by $r_{\max}=1$ (the bound for rewards) and $\bar T^{(l)} $ by its Lipschitz module,
		we obtain the following inequality using Lemmas 3.2 and 3.4 in \cite{Hinderer2005}:
		\begin{align}
		l_{\rho_{\mathcal{B}}}(J_{t})&\leq r_{\max}\cdot \sum_{l=0}^{n_0-1}\beta^t l_{\rho_{\mathcal{B}}}^{{\mathcal{I}}^{l}}(\bar{T}^{(l)})+\beta^{n_0} \cdot l_{\rho_{\mathcal{B}}}^{{\mathcal{I}}^{n_0}}(\bar{T}^{(n_0)}) \cdot l_{\rho_{\mathcal{B}}}(J_{t+n_0}),\label{lip-module-Bellman}
		\end{align}
		where $l_{\rho_{\mathcal{B}}}^{{\mathcal{I}}^{l}}(\bar{T}^{(l)})$ is the supremum of the Lipschitz module $l_{\rho_{\mathcal{B}}}(\bar{T}^{(l)})$ over actions:
		\begin{align}
		l_{\rho_{\mathcal{B}}}^{{\mathcal{I}}^{l}}(\bar{T}^{(l)}) \coloneqq \sup_{i_{t:t+l-1}}\sup_{b_t\neq b'_t}\frac{K_{\rho_{\mathcal{B}}}(\bar{T}^{(l)}(db_{t+l}|b_t,i_{t:t+l-1}),\bar{T}^{(l)}(db_{t+l}|b'_t,i_{t:t+l-1}))}{\rho_{\mathcal{B}}(b_t,b'_t)}, \quad 0 \leq l \leq n_0.
		\end{align}
		Note that the value function in the last period $J_n(b_n)=\bar{c}(b_n, i_n)$ is uniformly Lipschitz in $i_n$ with Lipschitz module $r_{\max}=1$.
		Applying the last inequality iteratively
		for $n_i=1+ i n_0$ with $0 \le i < k$ and by Lemma~\ref{lemma:lip_module_bound},
		we have
		\begin{align}
		l_{\rho_{\mathcal{B}}}(J_{n_i})
		&\leq  \sum_{t=0}^{n_0-1}\beta^t l_{\rho_{\mathcal{B}}}^{{\mathcal{I}}^{t}}(\bar{T}^{(t)})+\beta^{n_0} \cdot \gamma  \cdot l_{\rho_{\mathcal{B}}}(J_{n_{i+1}})\\
		&\leq  \sum_{t=0}^{n_0-1}[C_4 \alpha^t + C_5]+\beta^{n_0} \cdot \gamma  \cdot l_{\rho_{\mathcal{B}}}(J_{n_{i+1}})\\
		&\leq \eta+\beta^{n_0}\gamma \cdot l_{\rho_{\mathcal{B}}}(J_{n_{i+1}}),
		\end{align}
		where
		\begin{equation} \label{eq:eta}
		\eta =  \frac{C_4}{1-\alpha} + C_5 n _0,
		\end{equation}
		and $C_4, C_5, n_0, \alpha$ are given in Lemma~\ref{lemma:lip_module_bound}.
		Iterating over $i$ and using $l_{\rho_{\mathcal{B}}}(J_{n}) = l_{\rho_{\mathcal{B}}}(J_{k n_0}) = r_{\max}$, we obtain
		\begin{align}
		l_{\rho_{\mathcal{B}}}(J_{0})\leq \eta \cdot \frac{1-\left(\beta^{n_0}\gamma \right)^{k}}{1-\beta^{n_0}\gamma} + \left(\beta^{n_0}\gamma \right)^{k} \cdot r_{\max}.
		\end{align}
		Recall that for $n=k n_0,$ $V_{n,\beta}(b) = V_{k n_0,\beta}(b) = J_{0}(b)$. Since $\beta <1$ and $\gamma<1$, we then get
		\begin{align}\label{lip-module-infinite}
		\liminf\limits_{k\rightarrow\infty}l_{\rho_{\mathcal{B}}}(V_{k n_0,\beta}) \leq \frac{\eta}{1-\gamma}.
		\end{align}
		Together with \eqref{eq:lip-v} and \eqref{lip-module-average},
		we can deduce that for a belief MDP satisfying $\min_{i,j \in \mathcal{M}} P_{ij} =\epsilon>0$, the span of the bias function is upper bounded by $$ span(v) \le D(\epsilon):= \frac{2\eta(\epsilon)}{1-\gamma(\epsilon)},$$ where with slight abuse of notations we use $\eta(\epsilon)$ (see \eqref{eq:eta}) and $\gamma(\epsilon)$ (see Lemma~\ref{lemma:lip_module_bound}) to emphasize their dependency on $\epsilon.$ The proof is completed by simplifying the expression of $D(\epsilon)$.
	\end{proof}

	\subsection{Proof of Lemma~\ref{lemma:lip_module_bound}}
	\begin{proof}
		Rewriting the Kantorovich metric, we have:
		\begin{align}
		&K\{\bar{T}^{(n)}(db'|b^1,i_{1:n}),\bar{T}^{(n)}(db'|b^2,i_{1:n}\}\\
		&=\sup_f\left\{\left|\int f(b')\bar{T}^{(n)}(db'|b^1,i_{1:n})-\int f(b')\bar{T}^{(n)}(db'|b^2,i_{1:n})\right|,f\in \text{Lip}_1\right\}\\
		&=\sup_f\left\{\left|\int f(b')\bar{T}^{(n)}(db'|b^1,i_{1:n})-\int f(b')\bar{T}^{(n)}(db'|b^2,i_{1:n})\right|,f\in \text{Lip}_1,||f||_\infty\leq 1\right\}.
		\end{align}
		The last equality follows from the following argument. Note that the span of a function $f$ with Lipschitz module 1 is bounded by $\text{Diam}(\B)$ where $\text{Diam}(\B):=\sup_{b^1\neq b^2}||b^1-b^2||_1= 2$.
		So for any $f\in \text{Lip}_1$ we can find a constant c that $||f+c||_\infty\leq \text{Diam}(\B)/2$. Moreover, let $\phi(f)=|\int f(b')\bar{T}^{(n)}(db'|b^1,i_{1:n})-\int f(b')\bar{T}^{(n)}(db'|b^2,i_{1:n})|$, we know $\phi(f)=\phi(f+c)$ for any constant $c$. Without loss of generality, we can constrain $||f||_\infty \leq \text{Diam}(\B)/2 \le 1$.

		We introduce a few notations to facilitate the presentation. We define the $n$-step reward kernel $\bar{Q}^{(n)}$, where $\bar{Q}^{(n)}(\prod_{t=1}^{n}dr_t|b,i_{1:n})$ is a probability measure on $\mathcal{R}^n$:
		\begin{align}\label{def:n-belief-reward-kernel}
		\bar{Q}^{(n)}(A_1\times...\times A_{n}|b,i_{1:n})=\Prob((r_1,\dots,r_{n})\in A_1\times...\times A_{n}|b,i_{1:n}).
		\end{align}
		Given the initial belief $b$, we can define the $n$-step forward kernel $H^{(n)}$ as follows where $b_{n+1}$ is the belief at time $n+1$:
		\begin{align}\label{def:n-forward-kernel}
		b_{n+1}=H^{(n)}(b,i_{1:n},r_{1:n}).
		\end{align}
		Then it is easy to see that the belief transition kernel $\bar{T}^{(n)}$ defined in \eqref{def:n-trans-ker} satisfy
		\begin{align}
		\bar{T}^{(n)}(\mathbf{A}|b,i_{1:n})=\int_{\mathcal{R}^n}\1_{\{H^{(n)}(b,i_{1:n},r_{1:n})\in \mathbf{A}\}}\bar{Q}^{(n)}(\prod_{t=1}^{n}dr_t|b,i_{1:n}).
		\end{align}
		Then we can obtain:
		{\small
			\begin{align}
			&\left|\int_{\mathcal{R}^n} f(b')\bar{T}^{(n)}(db'|b^1,i_{1:n})-\int_{\mathcal{R}^n} f(b')\bar{T}^{(n)}(db'|b^2,i_{1:n})\right|\\
			&=\left|\int_{\mathcal{R}^n} f(H^{(n)}(b^1,i_{1:n},r_{1:n}))\bar{Q}^{(n)}(\prod_{t=1}^{n}dr_t|b^1,i_{1:n})-\int_{\mathcal{R}^n} f(H^{(n)}(b^2,i_{1:n},r_{1:n}))\bar{Q}^{(n)}(\prod_{t=1}^{n}dr_t|b^2,i_{1:n})\right|\\
			&\leq\left|\int_{\mathcal{R}^n} f(H^{(n)}(b^1,i_{1:n},r_{1:n}))\left(\bar{Q}^{(n)}(\prod_{i=0}^{n-1}dr_t|b^1,i_{1:n})-\bar{Q}^{(n)}(\prod_{t=1}^{n}dr_t|b^2,i_{1:n})\right)\right|\\
			&\quad\quad+\left|\int_{\mathcal{R}^n} \left(f(H^{(n)}(b^1,i_{1:n},r_{1:n}))- f(H^{(n)}(b^2,i_{1:n},r_{1:n}))\right)\bar{Q}^{(n)}(\prod_{t=1}^{n}dr_t|b^2,i_{1:n})\right|.\label{bound-Kan-metric}
			\end{align}
		}
		We first bound the second term in \eqref{bound-Kan-metric}. We can infer from Theorem 3.7.1 in \cite{Krish2016} and its proof that the impact of initial belief decays exponentially fast:
		\begin{equation}
		|H^{(n)}(b^1,i_{1:n},r_{1:n})-H^{(n)}(b^2,i_{1:n},r_{1:n})|\leq C_4 \alpha^{n}||b^1-b^2||_1,
		\end{equation}
		where constant $C_4 = \frac{2(1-\epsilon)}{ \epsilon} $ and $\alpha= \frac{1- 2 \epsilon}{1 -\epsilon}< 1$.
		So the second term of \eqref{bound-Kan-metric} can be bounded by
		\begin{align}
		&\left|\int_{\mathcal{R}^n} \left(f(H^{(n)}(b^1,i_{1:n},r_{1:n}))- f(H^{(n)}(b^2,i_{1:n},r_{1:n}))\right)\bar{Q}^{(n)}(\prod_{t=1}^{n}dr_t|b^2,i_{1:n})\right|\\
		&\leq \left|\int_{\mathcal{R}^n} \left |H^{(n)}(b^1,i_{1:n},r_{1:n})- H^{(n)}(b^2,i_{1:n},r_{1:n})\right| \bar{Q}^{(n)}(\prod_{t=1}^{n}dr_t|b^2,i_{1:n})\right|\\
		&\leq C_4 \alpha^{n}||b^1-b^2||_1,\label{bound-Kan-metric-p2}
		\end{align}
		where the first inequality follows from $f\in \text{Lip}_1$. 

		It remains to bound the first term in \eqref{bound-Kan-metric}.
		Recall that $b$ defines the initial probability distribution $M_1$.
		The $n$ steps observation kernel is
		\begin{align}
		\bar{Q}^{(n)}(\prod_{t=1}^{n}dr_t|b,i_{1:n})&=\sum_{m\in\mathcal{M}}\Prob(M_1=m)\Prob(\prod_{t=1}^{n}dr_t|M_1=m,i_{1:n})=\sum_{m\in\mathcal{M}}b(m)\Prob(\prod_{t=1}^{n}dr_t|M_1=m,i_{1:n}).
		\end{align}
		Define a vector $g\in\R^M$ as:
		\begin{equation*}
		g(m) \coloneqq \int_{\mathcal{R}^n}f(H^{(n)}(b^1,i_{1:n},r_{1:n}))\Prob(\prod_{t=1}^{n}dr_t|M_1=m,i_{1:n}).
		\end{equation*}
		We can rewrite the first term of \eqref{bound-Kan-metric}:
		\begin{align}
		&\left|\int_{\mathcal{R}^n} f(H^{(n)}(b^1,i_{1:n},r_{1:n}))\left(\bar{Q}^{(n)}(\prod_{t=1}^{n}dr_t|b^1,i_{1:n})-\bar{Q}^{(n)}(\prod_{t=1}^{n}dr_t|b^2,i_{1:n})\right)\right|\\
		&=\left|\sum_{m=1}^{M}(b^1(m)-b^2(m))\int_{\mathcal{R}^n}f(H^{(n)}(b^1,i_{1:n},r_{1:n}))\Prob(\prod_{t=1}^{n}dr_t|M_1=m,i_{1:n})\right|\\
		&=\left|\sum_{m=1}^{M}\left(b^1(m)-b^2(m)\right)g(m)\right|\\
		&=\left|\sum_{m=1}^{M}\left(b^1(m)-b^2(m)\right)\left(g(m)-\frac{\max_m g(m)+\min_m g(m)}{2}\right)\right|\\
		&\leq \left \|b^1-b^2\right \|_1 \cdot \left \|g(m)-\frac{\max_m g(m)+\min_m g(m)}{2}\right \|_\infty\\
		&= \left \|b^1-b^2\right \|_1\frac{1}{2}\left(\max_m g(m)-\min_m g(m)\right),\label{bound-Kan-metric-p1-0}
		\end{align}
		where the next to last equality follows from $\sum_{m=1}^{M}\left(b^1(m)-b^2(m)\right)=0$.

		Next we bound $\max_m g(m)-\min_m g(m)$.
		From the equation above, it is clear that the quantity $\frac{1}{2}\left(\max_m g(m)-\min_m g(m)\right) \le 1$, because $||f ||_\infty \leq 1$. However to prove Lemma~\ref{lemma:lip_module_bound}, we need a sharper bound so that we can find a constant $C_5<1$ (that is independent of $b^1, n$ and $i_{1:n}$) with
		\begin{equation}\label{eq:C4-bound}
		\frac{1}{2}\left(\max_m g(m)-\min_m g(m)\right) \le C_5 <1.
		\end{equation}

		Suppose \eqref{eq:C4-bound} holds. Then
		on combining \eqref{bound-Kan-metric}, \eqref{bound-Kan-metric-p2} and \eqref{bound-Kan-metric-p1-0}, we obtain
		\begin{align}\label{inequ:Kan_insider}
		\left|\int_{\mathcal{B}} f(b')\bar{T}^{(n)}(db'|b^1,i_{1:n})-\int_{\mathcal{B}} f(b')\bar{T}^{(n)}(db'|b^2,i_{1:n})\right|
		\leq C_4 \alpha^{n}||b^1-b^2||_1+C_5 ||b^1-b^2||_1.
		\end{align}
		It then follows that the Kantorovich metric is bounded by
		\begin{align}\label{inequ:Kantorovich metric}
		K\left(\bar{T}^{(n)}(db'|b^1,i_{1:n}),\bar{T}^{(n)}(db'|b^2,i_{1:n})\right) \leq C_4\alpha^{n}||b^1-b^2||_1+C_5||b^1-b^2||_1,
		\end{align}
		where $C_4=\frac{2(1-\epsilon)}{\epsilon}, \alpha=1-\frac{\epsilon}{1-\epsilon}$, and $\epsilon=\min\limits_{m,m' \in \mathcal{M}}P_{m,m'}>0$.
		So $\bar{T}^{(n)}$ is Lipschitz uniformly in actions, and
		its Lipschitz module can be bounded as follows:
		\begin{align}
		l^{\mathcal I^{n}}_{\rho_{\mathcal{B}}}(\bar{T}^{(n)}):=\sup_{i_{1:n}}\sup_{b^1\neq b^2}\frac{K\left(\bar{T}^{(n)}(db'|b^1,i_{1:n}),\bar{T}^{(n)}(db'|b^2,i_{1:n})\right)}{\rho_{\mathcal{B}}(b^1,b^2)} \le C_4\alpha^{n}+C_5.
		\end{align}
		If we choose $n=n_0:=\lceil \log_{\alpha}\frac{1-C_5}{2C_4} \rceil$, so that
		$C_4\alpha^{n_0}+C_5<\frac{1}{2}(1+C_5)\coloneqq\gamma<1$, then
		we obtain the desired result $
		l^{\mathcal I^{n_0}}_{\rho_{\mathcal{B}}}(\bar{T}^{(n_0)})< \gamma.$

		It remains to prove \eqref{eq:C4-bound}.
		Since the set $\mathcal{M}=\{1, \ldots, M\}$ is finite, we pick $m^* \in \argmin \limits_{m \in \mathcal{M}} g(m), \hat m \in \argmax \limits_{m \in \mathcal{M}} g(m)$. We have
		\begin{align} \label{eq:diff-g}
		&\frac{1}{2}\left(\max_m g(m)-\min_m g(m)\right)\\
		&= \frac{1}{2}\sum_{r_{1:n} \in \mathcal{R}^n}f(H^{(n)}(b^1,i_{1:n},r_{1:n}))\left(\Prob(r_{1:n}|M_1=\hat m,i_{1:n})-\Prob(r_{1:n}|M_1=m^*,i_{1:n})\right)\\
		&\leq \frac{1}{2} \sum_{r_{1:n} \in \mathcal{R}^n } \left|\Prob(r_{1:n}|M_1=\hat m,i_{1:n})-\Prob(r_{1:n}|M_1=m^*,i_{1:n})\right|,
		\end{align}
		where the inequality follows from H\" older's inequality with $||f||_\infty\leq 1$.
		We can compute
		\begin{align}
		&\Prob(r_{1:n}|M_1=m_1,i_{1:n})\\
		&=\sum_{m_{2:n}\in\mathcal{M}^{n-1}}\Prob(r_{1:n}|M_1=m_1,i_{1:n},M_{2:n}=m_{2:n})\cdot \Prob(M_{2:n}=m_{2:n}|M_1=m_1,i_{1:n})\\
		&=\sum_{m_{2:n}\in\mathcal{M}^{n-1}}\Prob(r_{1:n}|i_{1:n},m_{1:n})\cdot \Prob(m_{2:n}|M_1=m_1,i_{1:n})\\
		&=\sum_{m_{2:n}\in\mathcal{M}^{n-1}}\left(\prod_{t=1}^{n}\Prob(r_t|m_t,i_{t})\right) \cdot \left(\prod_{t=1}^{n-1}\Prob(m_{t+1}|m_t,i_{t})\right),
		\end{align}
		where the last equality holds due to the conditional independence. We can then infer that for any $\{r_{1:n}\},  \{i_{1:n}\}$,
		\begin{align}
		&\Prob(r_{1:n}|M_1=m^*,i_{1:n})\\
		&=\sum_{m_{2:n}\in\mathcal{M}^{n-1}}\left(\prod_{t=2}^{n}\Prob(r_t|m_t,i_t)\right) \cdot \left(\prod_{t=1}^{n-1}\Prob(m_{t+1}|m_t,i_{t})\right) \cdot P(m^*,m_2) \\
		& \ge \sum_{m_{2:n}\in\mathcal{M}^{n-1}}\left(\prod_{t=1}^{n}\Prob(r_t|m_t,i_{t})\right) \cdot \left(\prod_{t=2}^{n-1}\Prob(m_{t+1}|m_t,i_{t})\right) \cdot P( \hat m,m_1) \cdot \frac{\epsilon}{1 - \epsilon} \\
		& = \Prob(r_{1:n}|M_1= \hat m,i_{1:n}) \cdot \frac{\epsilon}{1 - \epsilon}.
		\end{align}
		It follows that
		\begin{align*}
		& \left|\Prob(r_{1:n}|M_1=\hat m,i_{1:n})-\Prob(r_{1:n}|M_1=m^*,i_{1:n})\right| \\
		& \le \max\left\{\left(1- \frac{\epsilon}{1 - \epsilon} \right) \Prob(r_{1:n}|M_1=\hat m,i_{1:n}), \Prob(r_{1:n}|M_1=m^*,i_{1:n}) \right\} \\
		& \le \left(1- \frac{\epsilon}{1 - \epsilon} \right) \Prob(r_{1:n}|M_1=\hat m,i_{1:n}) + \Prob(r_{1:n}|M_1=m^*,i_{1:n}).
		\end{align*}
		Then we can obtain from \eqref{eq:diff-g} that
		\begin{align}
		&\frac{1}{2}\left(\max_m g(m)-\min_m g(m)\right)\\
		& \leq \frac{1}{2} \sum_{r_{1:n} \in \mathcal{R}^n} \left[
		\left(1- \frac{\epsilon}{1 - \epsilon} \right) \Prob(r_{1:n}|M_0=\hat m,i_{1:n}) + \Prob(r_{1:n}|M_0=m^*,i_{1:n}) \right] \\
		& = \alpha/2 + 1/2 := C_5 <1,
		\end{align}
		where $\alpha= 1 - \frac{\epsilon}{1 - \epsilon} \in (0,1).$ The proof is complete.
	\end{proof}

	\section{Proof of Theorem \ref{thm:upper_bound}} \label{sec:proof-complete}

	In this section we provide a complete version of the proof of Theorem~\ref{thm:upper_bound} sketched in Section~\ref{sec:proof1}.

	We follow our prior simplication that random reward follows the Bernoulli distribution. Here we want to clarify two notations in advance. $\E^{\pi}$ means the expectation is taken respect to true mean reward matrix $\bm{\mu}$ and transition probabilities $P$, and $\E_{k}^{\pi}$ denotes that the underlying parameters are estimators $\bm{\mu}_k$ and $P_k$.

	Recalling the definition of regret in \eqref{def:reg}, it can be rewritten as:
	\begin{align}
	\mathcal{R}_T=\sum_{t=1}^{T}(\rho^*-R_t)=\sum_{t=1}^{T}(\rho^*-\E^\pi[R_t|\mathcal{F}_{t-1}])+\sum_{t=1}^{T}(\E^\pi[R_t|\mathcal{F}_{t-1}]-R_t).\label{reg}
	\end{align}

	We first bound the second term of \eqref{reg}, i.e. the total bias between the conditional expectation of reward and the realization. Define a stochastic process $\{X_n, n=0,\cdots,T\}$ as:
	\begin{align}
	X_0=0,\quad
	X_t=\sum_{l=1}^{t}(\E^\pi[R_l|\mathcal{F}_{l-1}]-R_l),
	\end{align}
	then the second term in \eqref{reg} is $X_T$.
	It is easy to see that $X_t$ is a martingale. Moreover, due to the Bernoulli distribution of $R_t$,
	\begin{align}
	|X_{t+1}-X_t|=|\E^{\pi}[R_{t+1}|\mathcal{F}_t]-R_{t+1}|\leq 1.
	\end{align}
	Applying the Azuma-Hoeffding inequality \cite{Azuma1967}, we have
	\begin{align}\label{reg:expreward-reward}
	\Prob\left(\sum_{t=1}^{T}(\E^\pi[R_t|\mathcal{F}_{t-1}]-R_t) \geq \sqrt{2T\ln{\frac{1}{\delta}}}\right) \leq \delta.
	\end{align}

	Next we bound the first term of  \eqref{reg}.
	Recall that the definition of belief state under the optimistic and true parameters:
	$b_t^k(m)=\Prob_{\bm{\mu}_k,P_k}(M_t=m|\mathcal{F}_{t-1})$ and
	$b_t(m)=\Prob(M_t=m|\mathcal{F}_{t-1})$, and the definition of reward functions with respect to the true belief state
	$\bar{c}(b_t,i)=\sum_{m=1}^{M}\mu_{m,i}b_t(m)$.
	We can also define the reward functions with respect to the optimistic belief state $b_t^k$ as:
	\begin{align}
	\bar{c}_k(b_t^k,i)=\sum_{m=1}^{M}(\bm{\mu}_k)_{m,i} b_t^k(m).
	\end{align}

	Because $I_t$ is also adapted to $\mathcal F_{t-1}$, we have
	\begin{align}
	&\E^{\pi}[\mu (M_t, I_t)|\mathcal{F}_{t-1}]=\bar{c}(b_t,I_t)=\langle (\bm{\mu})_{I_t},b_t \rangle,\\
	&\E_{k}^{\pi}[\mu_k (M_t, I_t)|\mathcal{F}_{t-1}]=\bar{c}_k(b_t^k,I_t)=\langle (\bm{\mu}_k)_{I_t},b_t^k \rangle,\label{equ:belief-reward-func}
	\end{align}
	where $\mu (M_t, I_t),\mu_k (M_t, I_t)$ are the $M_t$-th row $I_t$-th column element of matrix $\mu,\mu_k$ respectively, and  $(\bm{\mu}_k)_{I_t}$ and $(\bm{\mu})_{I_t}$ are the $I_t$-th column vector of the reward matrix $\bm{\mu}_k$ and $\bm{\mu}$, respectively.

	Then we can rewrite the first term of \eqref{reg}:
	\begin{align}\label{reg1}
	\sum_{t=1}^{T}(\rho^*-\E^\pi[R_t|\mathcal{F}_{t-1}])=\sum_{t=1}^{T}(\rho^*-\E^\pi[\mu(M_t, I_t)|\mathcal{F}_{t-1}])=\sum_{t=1}^{T}(\rho^*-\bar{c}(b_t,I_t)),
	\end{align}
	where the first equation is due to the tower property and the fact that $R_t$ and $\mathcal{F}_{t-1}$ are conditionally independent given $M_t$ and $I_t$.

	Let $K$ be the number of total episodes. For each episode $k=1, 2, \cdots, K$, let $H_k, E_k$ be the exploration and exploitation phases, respectively. Then we can split equation \eqref{reg1} to the summation of the bias in these two phases as:
	\begin{align}\label{reg2}
	\sum_{k=1}^{K}\sum_{t\in H_k}(\rho^*-\bar{c}(b_t,I_t))+\sum_{k=1}^{K}\sum_{t\in E_k}(\rho^*-\bar{c}(b_t,I_t)).
	\end{align}
	Moreover, we remark here that the length of the last exploitation phase $|E_K|=\min\{\tau_2\sqrt{K},\max\{T-(K\tau_1+\sum_{k=1}^{K-1}\tau_2\sqrt{k}),0\}\}$, as it may end at period $T$.

	{\it Step 1: Bounding the regret in exploration phases}

	The first term of \eqref{reg2} can be simply upper bounded by:
	\begin{align}\label{reg:explore-bound}
	\sum_{k=1}^{K}\sum_{t\in H_k}(\rho^*-\bar{c}(b_t,I_t)) \leq\sum_{k=1}^{K}\sum_{t\in H_k}\rho^*=K\tau_1\rho^*.
	\end{align}

	{\it Step 2: Bounding the regret in exploitation phases}

	We bound it by separating into ``success'' and ``failure'' events below.
	Recall that in episode $k$, we define the set of plausible POMDPs $\mathbb{G}_k(\delta_k)$, which is defined in terms of confidence regions $\mathcal{C}_k(\delta_k)$ around the estimated mean reward matrix $\bm{\mu}_k$ and the transition probabilities $P_k$. Then choose an optimistic POMDP $\widetilde{\mathbb{G}}_k \in \mathbb{G}_k(\delta_k)$ that has the optimal average reward among the plausible POMDPs and denote its corresponding reward matrix, value function, and the optimal average reward by $\bm\mu_k,v_k$ and $\rho^k$, respectively. Thus, we say a ``success'' event if and only if the set of plausible POMDPs $\mathbb{G}_k(\delta_k)$ contains the true POMDP $\mathbb{G}$. In the following proof, we omit the dependence on $\delta_k$ from $\mathbb{G}_k(\delta_k)$ for simplicity.
	From Algorithm \ref{alg:SEEU}, the confidence level of $\mu_k$ in episode $k$ is $1-\delta_k$, we can obtain:
	\begin{align}
	\Prob(\mathbb{G}\notin\mathbb{G}_k, \text{for some}~k)
	\leq \sum_{k=1}^{K}\delta_k=\sum_{k=1}^{K}\frac{\delta}{k^3}\leq \frac{3}{2}\delta.
	\end{align}

	Thus, with probability at least $1-\frac{3}{2}\delta$, ``success'' events happen. It means $\rho^*\leq \rho^k$ for any $k$ because $\rho^k$ is the optimal average reward of the optimistic POMDP $\widetilde{\mathbb{G}}_k$ from the set $\mathbb{G}_k$. Then we can bound the regret of ``success'' events in exploitation phases as follows.
	\begin{align}
	\sum_{k=1}^{K}\sum_{t\in E_k}(\rho^*-\bar{c}(b_t,I_t))
	& \leq \sum_{k=1}^{K}\sum_{t\in E_k}(\rho^k-\bar{c}(b_t,I_t)) \\
	&=\sum_{k=1}^{K}\sum_{t\in E_k}(\rho^k-\bar{c}_k(b_t^k,I_t))+(\bar{c}_k(b_t^k,I_t)-\bar{c}(b_t,I_t)).\label{reg:success-bound-v1}
	\end{align}

	To bound the first term of formula \eqref{reg:success-bound-v1}, we use the Bellman optimality equation for the optimistic belief MDP $\widetilde{\mathbb{G}}_k$ on the continuous belief state space $\mathcal{B}$:
	\begin{align}\label{equ:Bellman-k}
	\rho^k+v_k(b_t^k)= \bar{c}_k(b_t^k,I_t)+\int_{b_{t+1}^k\in\mathcal{B}}v_k(b_{t+1}^k)\bar{T}_{k}(db_{t+1}^k|b_t^k,I_t)
	=\bar{c}_k(b_t^k,I_t)+\langle \bar{T}_{k}(\cdot|b_t^k,I_t),v_k(\cdot) \rangle,
	\end{align}
	where $\bar{T}_{k}(\cdot|b_t^k,I_t)=\mathbb{P}_{\bm{\mu}_k, P_k}(b_{t+1}\in \cdot|b_t^k,I_t)$ means transition probability of the belief state conditional on pulled arm under estimated reward matrix $\bm{\mu}_k$ and transition matrix of underlying Markov chain $P_k$ at time $t$.

	Moreover, we note that if value function $v_k$ satisfies the Bellman equation \eqref{equ:Bellman-thm}, then so is $v_k+c\bm{1}$.
	Thus, without loss of generality, we assume that $v_k$ needs to satisfy $||v_k||_{\infty}\leq \text{span}(v_k)/2$. Then from Proposition \ref{prop:span-uni-bound}, suppose the uniform bound of $\text{span}(v_k)$ is $D$, then we have:
	\begin{align}\label{vk-infty-norm-bound}
	||v_k||_{\infty}\leq \frac{1}{2}\text{span}(v_k)\leq \frac{D}{2}.
	\end{align}
	Thus the first term of \eqref{reg:success-bound-v1} can be bounded by
	\begin{align}
	&\sum_{k=1}^{K}\sum_{t\in E_k}(\rho^k-\bar{c}_k(b_t^k,I_t))\\
	&=\sum_{k=1}^{K}\sum_{t\in E_k}(-v_k(b_t^k)+\langle \bar{T}_{k}(\cdot|b_t^k,I_t),v_k(\cdot)\rangle)\\
	&=\sum_{k=1}^{K}\sum_{t\in E_k}(-v_k(b_t^k)+\langle \bar{T}(\cdot|b_t^k,I_t),v_k(\cdot)\rangle)+\langle \bar{T}_{k}(\cdot|b_t^k,I_t)-\bar{T}(\cdot|b_t^k,I_t),v_k(\cdot)\rangle,\label{reg:success-bound-p1}
	\end{align}
	where recall that $\bar{T}_{k}(\cdot|b_t^k,I_t)$ and $\bar{T}(\cdot|b_t^k,I_t)$ in the second equality are the belief state transition probabilities under estimated and true parameters, respectively.
	And the last inequality is applying the H$\ddot{\text{o}}$lder's inequality.


	For the first term of \eqref{reg:success-bound-p1}, we have
	\begin{align}
	&\sum_{k=1}^{K}\sum_{t\in E_k}(-v_k(b_t^k)+\langle \bar{T}(\cdot|b_t^k,I_t),v_k(\cdot)\rangle)\\
	&=\sum_{k=1}^{K}\sum_{t\in E_k}(-v_k(b_t^k)+v_k(b_{t+1}^k))+(-v_k(b_{t+1}^k)+\langle \bar{T}(\cdot|b_t^k,I_t),v_k(\cdot)\rangle)\\
	&=\sum_{k=1}^{K}v_k(b_{t_k+\tau_1+\tau_2\sqrt{k}}^k)-v_k(b_{t_k+\tau_1+1}^k)+\sum_{k=1}^{K}\sum_{t\in E_k}\E^\pi[v_k(b_{t+1}^k)|\mathcal{F}_t]-v_k(b_{t+1}^k).
	\end{align}
	where the first term in the last equality is due to the telescoping from $t_k+\tau_1+1$ to $t_k+\tau_1+\tau_2 \sqrt{k}$, the start and end of the exploitation phase in episode $k$.
	The second term in the last equality is because:
	\begin{align}
	\langle \bar{T}(\cdot|b_t^k,I_t),v_k(\cdot)\rangle=\int_{b_{t+1}^k\in\mathcal{B}}v_k(b_{t+1}^k)\bar{T}(db_{t+1}^k|b_t^k,I_t)=\E^\pi[v_k(b_{t+1}^k)|b_t^k]=\E^\pi[v_k(b_{t+1}^k)|\mathcal{F}_t].
	\end{align}
	Applying Proposition \ref{prop:span-uni-bound}, 	we have
	\begin{align}
	v_k(b_{t_k+\tau_1+\tau_2\sqrt{k}}^k)-v_k(b_{t_k+\tau_1+1}^k)\leq D.
	\end{align}
	We also need the following result, the proof of which is deferred to the end of this section.
	\begin{proposition}\label{prop:Azuma}
		Let $K$ be the number of total episodes up to time $T$. For each episode $k=1, \cdots, K$, let $E_k$ be the index set of the $k$th exploitation phase and $v_k$ be the value function of the optimistic POMDP at the $k$th exploitation phase.
		Then with probability at most $\delta$,
		\begin{align}\label{equ:Azuma}
		\sum_{k=1}^{K}\sum_{t\in E_k}\mathbb{E}^\pi[v_k(b_{t+1})|\mathcal{F}_t]-v_k(b_{t+1}) \geq D \sqrt{2T \ln(\frac{1}{\delta})},
		\end{align}
	\end{proposition}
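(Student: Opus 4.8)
The plan is to identify the left-hand side of \eqref{equ:Azuma} as a sum of bounded martingale differences and then apply the Azuma--Hoeffding inequality, mirroring the treatment of the second term of \eqref{reg} that led to \eqref{reg:expreward-reward}. For each period $t$ that falls in an exploitation phase $E_k$, define the increment
\begin{align*}
\xi_t \coloneqq \mathbb{E}^\pi[v_k(b_{t+1})\mid\mathcal{F}_t]-v_k(b_{t+1}),
\end{align*}
and set $\xi_t\coloneqq 0$ whenever $t$ lies in an exploration phase. Because the exploration length $\tau_1$ and the exploitation length $\tau_2\sqrt{k}$ are fixed given $k$, the episode boundaries, the index sets $E_k$, and the total episode count $K$ are all deterministic functions of $T$; hence $\{\xi_t\}_{t=1}^{T}$ is a genuine length-$T$ sequence whose total sum is precisely the left-hand side of \eqref{equ:Azuma}. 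The zero-padding on the exploration periods is what lets me run the concentration bound cleanly over the full horizon rather than over the non-contiguous exploitation periods.

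The main step is to check that $\{\xi_t\}$ is a martingale difference sequence with increments bounded by $D$. For the conditional-mean-zero property, the tower property of conditional expectation gives, for $t\in E_k$,
\begin{align*}
\mathbb{E}^\pi[\xi_t\mid\mathcal{F}_{t-1}]=\mathbb{E}^\pi\big[\mathbb{E}^\pi[v_k(b_{t+1})\mid\mathcal{F}_t]\mid\mathcal{F}_{t-1}\big]-\mathbb{E}^\pi[v_k(b_{t+1})\mid\mathcal{F}_{t-1}]=0,
\end{align*}
since $\mathcal{F}_{t-1}\subseteq\mathcal{F}_t$, and the zero increments on exploration periods satisfy this trivially. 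The care-point, and the reason the optimistic value function is recomputed only between episodes, is that $v_k$ must be frozen (that is, $\mathcal{F}_{t-1}$-measurable) throughout $E_k$ for the conditional expectations above to behave as desired; this holds because $v_k$ is determined once, at the start of the $k$th exploitation phase, from the samples of all previous exploration phases, and in particular does not depend on the reward that generates $b_{t+1}$. For the magnitude, recall from \eqref{vk-infty-norm-bound} that $\|v_k\|_\infty\le \tfrac{1}{2}\mathrm{span}(v_k)\le D/2$, with $D$ the uniform span bound from Proposition~\ref{prop:span-uni-bound}; since $\xi_t$ is a realized value minus its conditional average, both lying in an interval of length at most $\mathrm{span}(v_k)\le D$, we obtain $|\xi_t|\le D$.

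Finally I would apply the Azuma--Hoeffding inequality to $\sum_{t=1}^{T}\xi_t$ using $|\xi_t|\le D$ over the $T$ summands, which yields for any $\lambda>0$
\begin{align*}
\Prob\left(\sum_{t=1}^{T}\xi_t \ge \lambda\right)\le \exp\left(-\frac{\lambda^2}{2TD^2}\right).
\end{align*}
Choosing $\lambda = D\sqrt{2T\ln(1/\delta)}$ makes the exponent equal to $-\ln(1/\delta)$, so the right-hand side equals exactly $\delta$, giving the claimed bound. The only genuine obstacle is the adaptedness bookkeeping in the second paragraph: one must be certain that $v_k$ is held fixed across all of $E_k$ and is independent of the transition randomness used to form $b_{t+1}$, which is precisely what the separation of exploration and exploitation phases guarantees; the boundedness and the counting of at most $T$ terms are then routine.
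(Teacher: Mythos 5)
Your proof is correct and follows essentially the same route as the paper: both identify the sum as a martingale with increments bounded by $D$ (via the uniform span bound of Proposition~\ref{prop:span-uni-bound}), use the tower property for the conditional-mean-zero step, and apply Azuma--Hoeffding over at most $T$ terms with $\lambda=D\sqrt{2T\ln(1/\delta)}$. The only cosmetic difference is that you zero-pad the exploration periods to sum over all $t\le T$, whereas the paper re-indexes over exploitation periods only and bounds their count $\bar N$ by $T$ --- these are interchangeable.
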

	where the expectation $\mathbb{E}^{\pi}$ is taken respect to the true parameters $\bm{\mu}$ and $P$ under policy $\pi$, and the filtration $\mathcal{F}_t$ is defined as $\mathcal{F}_t \coloneqq \sigma(\pi_1,R^{\pi}_1,...,\pi_{t-1},R^{\pi}_{t-1})$.

	Applying Proposition \ref{prop:Azuma}, with probability at least $1-\delta$, we have:
	\begin{align}
	\sum_{k=1}^{K}\sum_{t\in E_k}\E^\pi[v_k(b_{t+1}^k)|\mathcal{F}_t]-v_k(b_{t+1}^k)\leq D \sqrt{2T \ln(\frac{1}{\delta})}.
	\end{align}
	Thus, the first term of \eqref{reg:success-bound-p1} can be upper bounded by:
	\begin{align}\label{reg:success-bound-p1-p1}
	KD+D \sqrt{2T \ln(\frac{1}{\delta})}.
	\end{align}

	For the second term of \eqref{reg:success-bound-p1}, we note that  $\bar{T}(b_{t+1}^k|b_t^k,I_t)$ is zero except for two points where $b_{t+1}^k$ are exactly the Bayesian updating after receiving an observation of Bernoulli reward $r_t$ taking value $0$ or $1$. Thus, we have the following transition kernel:

	\begin{align}
	&\langle \bar{T}_{k}(\cdot|b_t^k,I_t)-\bar{T}(\cdot|b_t^k,I_t),v_k(\cdot)\rangle\\
	&\leq \left|\int_{\mathcal{B}} v_k(b')\bar{T}_k(db'|b_t^k,I_t)-\int_{\mathcal{B}} v_k(b')\bar{T}(db'|b_t^k,I_t)\right|\\
	&=\left| \sum_{r_{t} \in \mathcal{R}} v_k\left(H_{k}\left(b_{t}^{k}, I_{t}, r_{t}\right) \right)\mathbb{P}_k\left(r_{t}| b_{t}^{k}, I_{t}\right)-\sum_{r_{t} \in \mathcal{R}} v_k\left(H\left(b_{t}^{k}, I_{t}, r_{t}\right) \right)\mathbb{P}\left(r_{t}| b_{t}^{k}, I_{t}\right) \right|\\
	& \leq \left| \sum_{r_{t} \in \mathcal{R}} v_k\left(H_{k}\left(b_{t}^{k}, I_{t}, r_{t}\right) \right) \cdot  \left[\mathbb{P}_k\left(r_{t}| b_{t}^{k}, I_{t}\right)- \mathbb{P}\left(r_{t}| b_{t}^{k}, I_{t}\right) \right] \right|\\
	&\quad \quad + \left|\sum_{r_{t} \in \mathcal{R}} \left[ v_k\left(H_{k}\left(b_{t}^{k}, I_{t}, r_{t}\right) \right)-v_k\left(H\left(b_{t}^{k}, I_{t}, r_{t}\right) \right)\right]\cdot \mathbb{P}\left(r_{t}| b_{t}^{k}, I_{t}\right)\right|,\label{reg:belief-transition-distance}
	\end{align}
	where we use $H_k$ and $H$ to denote the belief updating function under the optimistic model $(\bm \mu_k, P_k)$ and the true model $(\bm \mu, P)$, and we use $\mathbb{P}_k$ and $\mathbb{P}$ to denote the probability with respect to the optimistic model and true model respectively.

	We bound the first term of \eqref{reg:belief-transition-distance} by
	\begin{align}
	&\left| \sum_{r_{t} \in \mathcal{R}} v_k\left(H_{k}\left(b_{t}^{k}, I_{t}, r_{t}\right) \right) \cdot  \left[\mathbb{P}_k\left(r_{t}| b_{t}^{k}, I_{t}\right)- \mathbb{P}\left(r_{t}| b_{t}^{k}, I_{t}\right) \right] \right|\\
	& \leq \left| v_k\left(H_{k}\left(b_{t}^{k}, I_{t}, r_{t}=1\right) \right) \cdot  \left[\mathbb{P}_k\left(r_{t}=1| b_{t}^{k}, I_{t}\right)- \mathbb{P}\left(r_{t}=1| b_{t}^{k}, I_{t}\right) \right] \right|\\
	&\quad\quad + \left| v_k\left(H_{k}\left(b_{t}^{k}, I_{t}, r_{t}=0\right) \right) \cdot  \left[\mathbb{P}_k\left(r_{t}=0| b_{t}^{k}, I_{t}\right)- \mathbb{P}\left(r_{t}=0| b_{t}^{k}, I_{t}\right) \right] \right|\\
	& =\left| v_k\left(H_{k}\left(b_{t}^{k}, I_{t}, 1\right) \right) \cdot \left[\langle (\bm{\mu}_k)_{I_t},b_t^k\rangle-\langle (\bm{\mu})_{I_t},b_t^k\rangle\right]\right|\\
	&\quad\quad+\left| v_k\left(H_{k}\left(b_{t}^{k}, I_{t}, 0\right) \right) \cdot \left[1-\langle (\bm{\mu}_k)_{I_t},b_t^k\rangle-\left(1-\langle (\bm{\mu})_{I_t},b_t^k\rangle\right)\right] \right|\\
	& \leq 2||v_k||_\infty \cdot \left|\langle (\bm{\mu}_k)_{I_t},b_t^k\rangle-\langle (\bm{\mu})_{I_t},b_t^k\rangle\right|\\
	& \leq D \left|\langle (\bm{\mu}_k)_{I_t},b_t^k\rangle-\langle (\bm{\mu})_{I_t},b_t^k\rangle\right|\\
	&\leq D || (\bm{\mu}_k)_{I_t} -(\bm{\mu})_{I_t}||_1 \cdot ||b_t^k||_\infty\\
	&\leq D|| (\bm{\mu}_k)_{I_t} -(\bm{\mu})_{I_t}||_1, \label{reg:belief-transition-distance-p1}
	\end{align}
	where the first equality comes from $\mathbb{P}_k\left(r_{t}=1| b_{t}^{k}, I_{t}\right)=\sum\limits_{m\in\mathcal{M}} \Prob_{k}(r_t=1|m,I_t) b_t^k(m)=\langle (\bm{\mu})_{I_t},b_t^k\rangle$ and $\mathbb{P}_k\left(r_{t}=0| b_{t}^{k}, I_{t}\right)=\sum\limits_{m\in\mathcal{M}} \Prob_{k}(r_t=0|m,I_t) b_t^k(m)=1-\langle (\bm{\mu})_{I_t},b_t^k\rangle$,
	and the third inequality is from \eqref{vk-infty-norm-bound}, we have $||v_k||_\infty \leq \frac{D}{2}$.

	We can bound the second term of \eqref{reg:belief-transition-distance} as follows:
	\begin{align}
	&\left|\sum_{r_{t} \in \mathcal{R}} \left[ v_k\left(H_{k}\left(b_{t}^{k}, I_{t}, r_{t}\right) \right)-v_k\left(H\left(b_{t}^{k}, I_{t}, r_{t}\right) \right)\right]\cdot \mathbb{P}\left(r_{t}| b_{t}^{k}, I_{t}\right)\right|\\
	&\leq \sum_{r_{t} \in \mathcal{R}} \left| v_k\left(H_{k}\left(b_{t}^{k}, I_{t}, r_{t}\right) \right)-v_k\left(H\left(b_{t}^{k}, I_{t}, r_{t}\right) \right)\right| \cdot \mathbb{P}\left(r_{t}| b_{t}^{k}, I_{t}\right)\\
	&\leq \sum_{r_{t} \in \mathcal{R}} \frac{D}{2} \left| H_{k}\left(b_{t}^{k}, I_{t}, r_{t} \right)-H\left(b_{t}^{k}, I_{t}, r_{t}\right) \right| \cdot \mathbb{P}\left(r_{t}| b_{t}^{k}, I_{t}\right)\\
	& \leq \sum_{r_{t} \in \mathcal{R}} \frac{D}{2} \left[ L_1|| \bm{\mu} -\bm{\mu}_k||_1+L_2||P- P_k||_F \right] \mathbb{P}\left(r_{t}| b_{t}^{k}, I_{t}\right)\\
	& = \frac{D}{2} \left[ L_1|| \bm{\mu} -\bm{\mu}_k||_1+L_2||P- P_k||_F \right], \label{reg:belief-transition-distance-p2}
	\end{align}
	where the second inequality is implied from the proof of Proposition \ref{prop:span-uni-bound}, and the last inequality if from Proposition \ref{prop:lip_bt}.

	Therefore, from \eqref{reg:belief-transition-distance-p1} and \eqref{reg:belief-transition-distance-p2}, we can obtain that the second term of equation \eqref{reg:success-bound-p1}:


	\begin{align}
	&\sum_{k=1}^{K}\sum_{t\in E_k}\langle \bar{T}_{k}(\cdot|b_t^k,I_t)-\bar{T}(\cdot|b_t^k,I_t),v_k(\cdot)\rangle\\
	& \leq \sum_{k=1}^{K}\sum_{t\in E_k} D\left[ || (\bm{\mu})_{I_t} -(\bm{\mu}_k)_{I_t}||_1+\frac{L_1}{2} || \bm{\mu} -\bm{\mu}_k||_1+\frac{L_2}{2}||P-  P_k||_F\right].\label{reg:success-bound-p1-p2}
	\end{align}

	Summing up \eqref{reg:success-bound-p1-p1} and \eqref{reg:success-bound-p1-p2}, the first term of formula \eqref{reg:success-bound-v1} can be bounded by
	\begin{align}
	&\sum_{k=1}^{K}\sum_{t\in E_k}(\rho^k-\bar{c}_k(b_t^k,I_t)) \leq KD+D \sqrt{2T \ln(\frac{1}{\delta})}\\
	&\quad\quad+\sum_{k=1}^{K}\sum_{t\in E_k} D\left[ || (\bm{\mu})_{I_t} -(\bm{\mu}_k)_{I_t}||_1+\frac{L_1}{2} || \bm{\mu} -\bm{\mu}_k||_1+\frac{L_2}{2}||P-  P_k||_F\right].\label{reg:success-bound-p1-v2}
	\end{align}

	Next we proceed to bound the second term of \eqref{reg:success-bound-v1}. By  \eqref{equ:belief-reward-func}, it can be rewritten as
	\begin{align}
	&\sum_{k=1}^{K}\sum_{t\in E_k}\bar{c}_k(b_t^k,I_t)-\bar{c}(b_t,I_t)\\
	&=\sum_{k=1}^{K}\sum_{t\in E_k}\langle (\bm{\mu}_k)_{I_t},b_t^k \rangle-\langle (\bm{\mu})_{I_t},b_t \rangle\\
	&=\sum_{k=1}^{K}\sum_{t\in E_k}\langle (\bm{\mu}_k)_{I_t},b_t^k \rangle- \langle (\bm{\mu})_{I_t},b_t^k \rangle+\langle (\bm{\mu})_{I_t},b_t^k \rangle-\langle (\bm{\mu})_{I_t},b_t \rangle,
	\end{align}
	then from the H$\ddot{\text{o}}$lder's inequality, we can further bounded the right hand side of above formula to
	\begin{align}
	&\sum_{k=1}^{K}\sum_{t\in E_k} ||(\bm{\mu}_k)_{I_t}-(\bm{\mu})_{I_t}||_1||b_t^k||_\infty+||(\bm{\mu})_{I_t}||_\infty||b_t^k-b_t||_1\\
	&\leq\sum_{k=1}^{K}\sum_{t\in E_k} ||(\bm{\mu}_k)_{I_t}-(\bm{\mu})_{I_t}||_1+||b_t^k-b_t||_1.\label{reg:success-bound-p2}
	\end{align}

	By Proposition \ref{prop:lip_bt}, we have $||b_t^k-b_t||_1\leq L_1||\bm{\mu}-\bm{\mu}_k||_1+L_2||P-P_k||_F$.
	Combining the above expression with \eqref{reg:success-bound-p1-v2}, we can see that with probability $1-\delta$, the regret incurred from ``success'' events, i.e., \eqref{reg:success-bound-v1}, can be bounded
	\begin{align}
	&\sum_{k=1}^{K}\sum_{t\in E_k}(\rho^*-\bar{c}(b_t,I_t))\\
	&\leq KD+D \sqrt{2T \ln(\frac{1}{\delta})}+\sum_{k=1}^{K}\sum_{t\in E_k}(D+1)||(\bm{\mu})_{I_t}-(\bm{\mu}_k)_{I_t}||_1 \\
	&\quad\quad +\sum_{k=1}^{K}\sum_{t\in E_k}\left[\left(1+\frac{D}{2}\right)L_1||\bm{\mu}-\bm{\mu}_k||_1+\left(1+\frac{D}{2}\right)L_2||P-P_k||_F\right].
	\end{align}

	Let $T_0$ be the period that the number of samples collected in the exploration phases exceeds $N_0$, that is,
	\begin{align}
	T_0:=\inf\limits_{t\geq 1}\{\sum_{n=1}^t\1_{(n\in H_k,\text{ for some }k)}\geq N_0\}\label{eqn:T_0}.
	\end{align}
	If $T\geq T_0$ after episode $k_0$, then from Proposition \ref{prop:spectral}, under the confidence level $\delta_k=\delta/k^3$, we have
	\begin{align}
	&||(\bm{\mu}_k)^m-(\bm{\mu})^m||_2\leq C_1\sqrt{\frac{\log(\frac{6(S^2+S)k^3}{\delta})}{\tau_1 k}}, \quad m \in \mathcal{M},\\
	&||P_k-P||_2\leq C_2\sqrt{\frac{\log(\frac{6(S^2+S)k^3}{\delta})}{\tau_1 k}}.
	\end{align}
	Together with the fact that the vector norm and matrix norm satisfy
	\begin{align}
	&||(\bm{\mu}_k)_i-(\bm{\mu})_i||_1 \leq\sum_{m=1}^M||(\bm{\mu}_k)^m-(\bm{\mu})^m||_1\leq \sum_{m=1}^M\sqrt{M}||(\bm{\mu}_k)^m-(\bm{\mu})^m||_2, \quad i \in \mathcal{I},\\
	&||\bm{\mu}_k-\bm{\mu}||_1 =\max_{i} ||(\bm{\mu}_k)_i-(\bm{\mu})_i||_1\leq\sum_{m=1}^M||(\bm{\mu}_k)^m-(\bm{\mu})^m||_1\leq \sum_{m=1}^M\sqrt{M}||(\bm{\mu}_k)^m-(\bm{\mu})^m||_2, \\
	&||P-P_k||_F\leq \sqrt{M} ||P-P_k||_2,
	\end{align}
	we obtain with probability at least $1-\frac{5}{2}\delta$, the regret in exploitation phase can be bounded by
	\begin{align}
	&\sum_{k=1}^{K}\sum_{t\in E_k}\rho^*-\bar{c}(b_t,I_t)\\
	&\leq KD+D \sqrt{2T \ln(\frac{1}{\delta})}+\sum_{k=k_0}^{K}\tau_2\sqrt{k}\left(D+1+\left(1+\frac{D}{2}\right)L_1\right)M\sqrt{M}C_1\sqrt{\frac{\log(\frac{6(S^2+S)k^3}{\delta})}{\tau_1 k}}\\
	&\quad\quad+\sum_{k=k_0}^{K}\tau_2\sqrt{k}\left(1+\frac{D}{2}\right)L_2 \sqrt{M}C_2\sqrt{\frac{\log(\frac{6(S^2+S))k^3}{\delta})}{\tau_1 k}}+T_0\rho^*\\
	&\leq KD+D \sqrt{2T \ln(\frac{1}{\delta})}+T_0\rho^*\\
	&\quad\quad+K\tau_2\left[\left(D+1+\left(1+\frac{D}{2}\right)L_1\right)M^{3/2}C_1+ \left(1+\frac{D}{2}\right)L_2M^{1/2}C_2\right]\sqrt{\frac{\log(\frac{6(S^2+S)K^3}{\delta})}{\tau_1}}.\\\label{reg:success-bound-v3}
	\end{align}

	{\it Step 3: Summing up the regret}

	Combining \eqref{reg:explore-bound} and \eqref{reg:success-bound-v3}, we can get that with probability at least $1-\frac{5}{2}\delta$, the first term of regret \eqref{reg} is bounded by
	\begin{align}
	&\sum_{t=1}^{T}\rho^*-\bar{c}(b_t,I_t)\\
	&\leq (K\tau_1+T_0)\rho^*+KD+D \sqrt{2T \ln(\frac{1}{\delta})}\\
	&\quad\quad+K\tau_2\left[\left(D+1+\left(1+\frac{D}{2}\right)L_1\right)M^{3/2}C_1+ \left(1+\frac{D}{2}\right)L_2M^{1/2}C_2\right]\sqrt{\frac{\log(\frac{6(S^2+S)K^3}{\delta})}{\tau_1}}.\label{reg:filtration}
	\end{align}

	Finally, combining \eqref{reg:expreward-reward} and \eqref{reg:filtration}, we can see that with probability at least $1-\frac{7}{2}\delta$, the regret presented in \eqref{reg} can be bounded by
	\begin{align}
	\mathcal{R}_T
	&\leq (K\tau_1+T_0)\rho^*+KD+D \sqrt{2T \ln(\frac{1}{\delta})}+\sqrt{2T\ln{\frac{1}{\delta}}}\\
	&\quad\quad+K\tau_2\left[\left(D+1+\left(1+\frac{D}{2}\right)L_1\right)M^{3/2}C_1+ \left(1+\frac{D}{2}\right)L_2M^{1/2}C_2\right]\sqrt{\frac{\log(\frac{6(S^2+S)K^3}{\delta})}{\tau_1}}).
	\end{align}
	Note that
	\begin{align}
	\sum_{k=1}^{K-1}\tau_1+\tau_2\sqrt{k} \leq T \leq \sum_{k=1}^{K}\tau_1+\tau_2\sqrt{k} ,
	\end{align}
	so the number of episodes $K$ is bounded by $(\frac{T}{\tau_1+\tau_2})^{2/3}\leq K\leq 3(\frac{T}{\tau_2})^{2/3}$ .\\
	Thus, we have shown that with probability at least
	$1-\frac{7}{2}\delta$,
	\begin{align}
	\mathcal{R}_T& \leq CT^{2/3}\sqrt{\log\left(\frac{3(S+1)}{\delta}T\right)}+T_0\rho^*,
	\end{align}
	where $S=2I$, and
	\begin{align}\label{para:C}
	C&=3\sqrt{2}\left[\left(D+1+\left(1+\frac{D}{2}\right)L_1\right)M^{3/2}C_1+ \left(1+\frac{D}{2}\right)L_2M^{1/2}C_2\right]\tau_2^{1/3}\tau_1^{-1/2}\\
	& \quad + 3\tau_2^{-2/3}(\tau_1\rho^*+D)+(D+1) \sqrt{2 \ln(\frac{1}{\delta})}.
	\end{align}
	Here, $M$ is the number of Markov chain states, 	where $L_1=4M(\frac{1-\epsilon}{\epsilon})^2/\min\left\{\bm\mu_{\min},1-\bm\mu_{\max}\right\}$, $L_2=4M(1-\epsilon)^2/\epsilon^3+\sqrt{M}$, with $\epsilon=\min_{1\leq i,j\leq M}P_{i,j}$, and $\bm\mu_{\max}$, $\bm\mu_{\min}$ are the maximum and minimum element of the matrix $\bm\mu$ respectively.

	\subsection{Proof of Proposition~\ref{prop:Azuma}}
	\begin{proof}
		For each episode $k=1, 2, \cdots, K$, let $E_k$ be the index set of the $k$th exploration phase, and $E=\cup_{k=1}^K E_k$ be the set of all exploitation periods among the horizon $T$ and $v_k$ be the value function of the optimistic POMDP at the $k$th exploitation phase. For an arbitrary time $t$, let $n=\sum_{i=1}^t \1_{i\in E}$, which means the number of exploitation periods up to time $t$. Define a stochastic process $\{Z_{n}, n \geq 0\}$:
		\begin{align}
		&Z_0=0,\\
		&Z_{n}=\sum_{j=1}^{n} \mathbb{E}^{\pi}[v_{k_j}(b_{t_j+1}^{k_j})|\mathcal{F}_{t_j}]-v_{k_j}(b_{t_j+1}^{k_j}),
		\end{align}
		where $k_j=\{k:j \in E_k\}$ and $t_j=\min\{t:\sum_{i=1}^t\1_{i \in E}=j\}$
		mean the corresponding episode and period of $j$th exploitation, respectively.

		We first show that $\{Z_{n}, n \geq 0\}$ is a martingale.
		Note that $\E^{\pi}[|Z_{n}|]\leq \sum_{j=1}^{n} \text{span}(v_{k_j})\leq nD<TD<\infty$.
		It remains to show $\E^{\pi}[Z_{n}|\mathcal{F}_{n-1}]=Z_{n-1}$ holds, i.e., $\E^{\pi}[Z_{n}-Z_{n-1}|\mathcal{F}_{n-1}]=0$.
		Note that
		\begin{align}
		\E^{\pi}[Z_{n}-Z_{n-1}|\mathcal{F}_{n-1}]=\E^{\pi}[\mathbb{E}^{\pi}[v_{k_n}(b_{t_n+1}^{k_n})|\mathcal{F}_{t_n}]-v_{k_n}(b_{t_n+1}^{k_n})|\mathcal{F}_{n-1}]=0,
		\end{align}
		where the last equality is due to $n-1 \leq n \leq t_n$ then applying the tower property.

		Therefore, $\{Z_n, n \geq 0\}$ is a martingale for any given policy $\pi$.
		Moreover, by Proposition~\ref{prop:span-uni-bound}, we have
		\begin{align*}
		|Z_{n}-Z_{n-1}|=|\mathbb{E}^{\pi}[v_{k_n}(b_{t_n+1}^{k_n})|\mathcal{F}_{t_n}]-v_{k_n}(b_{t_n+1}^{k_n})| \leq \text{span}(v_{k_n}) \leq D.
		\end{align*}
		Thus, $\{Z_n, n \geq 0\}$ is a martingale with bounded difference.

		Let $\bar{N}=\sum_{i=1}^T \1_{i\in E_k}$ and apply the Azuma-Hoeffding inequality \cite{Azuma1967}, we have
		\begin{align}
		\mathbb{P}(Z_{\bar{N}}-Z_{0} \geq \epsilon) \leq \exp \left(\frac{-\epsilon^{2}}{2 \sum_{t=1}^{\bar{N}} D^{2}}\right).
		\end{align}

		Note that $\bar{N} \leq T$ and $Z_{\bar{N}}=\sum_{k=1}^{K}\sum_{t\in E_k}\mathbb{E}^\pi[v_k(b_{t+1})|\mathcal{F}_t]-v_k(b_{t+1})$. Thus, setting $\epsilon=D \sqrt{2T \ln(\frac{1}{\delta})}$, we can obtain
		\begin{align}
		\mathbb{P}\left(\sum_{k=1}^{K}\sum_{t\in E_k}\mathbb{E}^\pi[v_k(b_{t+1})|\mathcal{F}_t]-v_k(b_{t+1}) \geq D \sqrt{2T \ln(\frac{1}{\delta})}\right) \leq \delta.
		\end{align}
		Hence we have completed the proof.
	\end{proof}

\end{document}